\documentclass{article}

\usepackage{hyperref}
\usepackage{xcolor}
\definecolor{mygreen}{RGB}{53,134,80}

\usepackage[accepted]{icml2026}

\usepackage[utf8]{inputenc}
\usepackage[T1]{fontenc}

\usepackage{microtype}
\usepackage{graphicx}
\usepackage{booktabs}

\usepackage{amsmath}
\usepackage{amssymb}
\usepackage{mathtools}
\usepackage{amsthm}
\usepackage{amsfonts}

\usepackage{array}
\usepackage{multirow}
\usepackage{wrapfig}
\usepackage{subcaption}
\usepackage{enumitem}
\usepackage{soul}
\usepackage{float}
\usepackage{rotating}
\usepackage{makecell}
\usepackage{xspace}
\usepackage{comment}
\usepackage[normalem]{ulem}

\usepackage{algorithm}
\usepackage{algorithmic}

\usepackage{newunicodechar}
\newunicodechar{，}{,}

\usepackage{booktabs}
\usepackage{adjustbox}
\usepackage[table]{xcolor}
\usepackage{array}
\usepackage{graphicx}
\usepackage{makecell}
\usepackage{pgfmath}
\usepackage{pgf}

\newcolumntype{C}{>{\centering\arraybackslash}m{0.48cm}}

\newcommand{\BiasCell}[2]{%
  \begingroup
  \pgfmathsetmacro{\biasclip}{min(1,#2)}%
  \pgfmathtruncatemacro{\shade}{6 + 72*\biasclip}%
  \edef\temp{\noexpand\cellcolor{red!\shade}}%
  \temp\scriptsize #1%
  \endgroup
}

\newcommand{\TimeCell}[1]{%
  \begingroup
  \pgfmathsetmacro{\tnorm}{max(0,min(1,((#1)-0.05)/(22.78-0.05)))}%
  \pgfmathtruncatemacro{\shade}{8 + 72*\tnorm}%
  \edef\temp{\noexpand\cellcolor{blue!\shade}}%
  \temp\scriptsize #1%
  \endgroup
}

\usepackage{amsmath,amsfonts,bm}

\def\eqref#1{equation~\ref{#1}}

\def\1{\bm{1}}

\def\rvx{{\mathbf{x}}}
\def\rvy{{\mathbf{y}}}

\DeclareMathAlphabet{\mathsfit}{\encodingdefault}{\sfdefault}{m}{sl}
\SetMathAlphabet{\mathsfit}{bold}{\encodingdefault}{\sfdefault}{bx}{n}

\def\D{{\mathcal{D}}}

\theoremstyle{plain}
\newtheorem{theorem}{Theorem}[section]

\newtheorem{proposition}[theorem]{Proposition}
\newtheorem{corollary}[theorem]{Corollary}
\theoremstyle{remark}

\newcommand{\h}{\mathcal{H}}
\newcommand{\mi}{\mathbb{I}}
\newcommand{\smi}{\mathbb{SI}}

\newcommand{\p}{\mathrm{p}}

\newcommand{\yc}[1]{{\color{blue} #1}}

\newcommand{\method}{{\it InfoAtlas}\xspace}
\definecolor{asparagus}{rgb}{0.53, 0.66, 0.42}
\definecolor{ballblue}{rgb}{0.13, 0.67, 0.8}
\icmltitlerunning{InfoAtlas: A Foundation Model for Zero-Shot Statistical Dependence Estimate}

\begin{document}

\twocolumn[
\icmltitle{InfoAtlas: A Foundation Model for Zero-Shot 
Statistical Dependence Estimate}


\icmlsetsymbol{equal}{*}
\icmlsetsymbol{corr}{+}

\begin{icmlauthorlist}
\icmlauthor{Zhengyang Hu}{hku,equal}
\icmlauthor{Yanzhi Chen}{cam,ms,equal}
\icmlauthor{Hanxiang Ren}{zju}
\icmlauthor{Qunsong Zeng}{hku}
\icmlauthor{Youyi Zheng}{zju}
\icmlauthor{Adrian Weller}{cam,al}
\icmlauthor{Kaibin Huang}{hku}
\icmlauthor{Yanchao Yang}{hku,corr}
\end{icmlauthorlist}

\icmlaffiliation{hku}{The University of Hong Kong}
\icmlaffiliation{cam}{University of Cambridge}
\icmlaffiliation{ms}{Microsoft}
\icmlaffiliation{al}{Alan Turing Institute}
\icmlaffiliation{zju}{Zhejiang University}

\icmlcorrespondingauthor{Zhengyang Hu}{u3010250@connect.hku.hk}
\icmlcorrespondingauthor{Yanzhi Chen}{yc514@cam.ac.uk}
\icmlcorrespondingauthor{Yanchao Yang}{yanchaoy@hku.hk}
\vskip 0.3in
\icmlkeywords{Mutual Information, Statistical Dependency, Foundation Model}
]

\printAffiliationsAndNotice{\icmlEqualContribution}

\begin{abstract}
Measuring statistical dependency between high-dimensional random variables is a fundamental task in data science and machine learning. Neural mutual information (MI) estimators offer a promising avenue, but they typically require costly iterative optimization for each new dataset, making them impractical for real-time applications. We present \method, a foundation model-like architecture that eliminates this bottleneck by directly inferring MI in a single forward pass. Pretrained on large-scale synthetic data with rich dependence patterns, \method learns to identify diverse dependence structures and predict MI directly from the dataset. Comprehensive experiments demonstrate that \method matches state-of-the-art neural estimators in accuracy while achieving 100× speedup,  can flexibly handle varying dimensions and sample sizes through a single unified model, and generalizes effectively to complex, real-world scenarios. By reformulating MI estimation as an inference task, \method establishes a foundation for real-time dependency analysis. {{Project page}: \href{https://datou30.github.io/InfoAtlas-page/}{InfoAtlas-project}}
\end{abstract}

\section{Introduction}
\label{sec:intro}

Understanding statistical dependencies between variables is fundamental to data science and machine learning. Quantifying how variables influence each other uncovers hidden structures and causal mechanisms that drive complex systems. Applications span a wide range of domains: in healthcare, identifying dependencies between lifestyle factors and disease risks enables personalized prevention strategies \citep{du2024lifestyle}; in autonomous driving, modeling dependencies between sensor signals and road conditions improves safety and decision making \citep{maanpaa2025dense}; in biology, assessing the dependence between protein sequences reveals insights for understanding their functional relationship~\cite{gowri2024approximating}; and in robotics, maximizing statistical dependence between observational states is shown useful for policy discovery~\cite{zhou2024maxmi}.

Mutual information (MI) \citep{shannon1948mathematical} has long served as a principled measure for dependency, uniquely capturing complex nonlinear relationships for multivariate variables in interpretable units of bits. Its generality has made it a core tool in data analysis, generative modeling and representation learning \citep{chen2016infogan,oord2018representation, chen2022scalable}. However, computing MI from empirical samples is notoriously difficult: closed-form solutions exist only for certain distributions, and neural estimators \citep{belghazi2018mutual, choi2020regularized, franzese2023minde,tschannen2019mutual,chen2020neural, tsai2020neural} require costly gradient-based optimization for every dataset, making them impractical for real-time or large-scale applications.

\begin{figure*}[t!]
     \centering
     \includegraphics[width=0.995\textwidth]{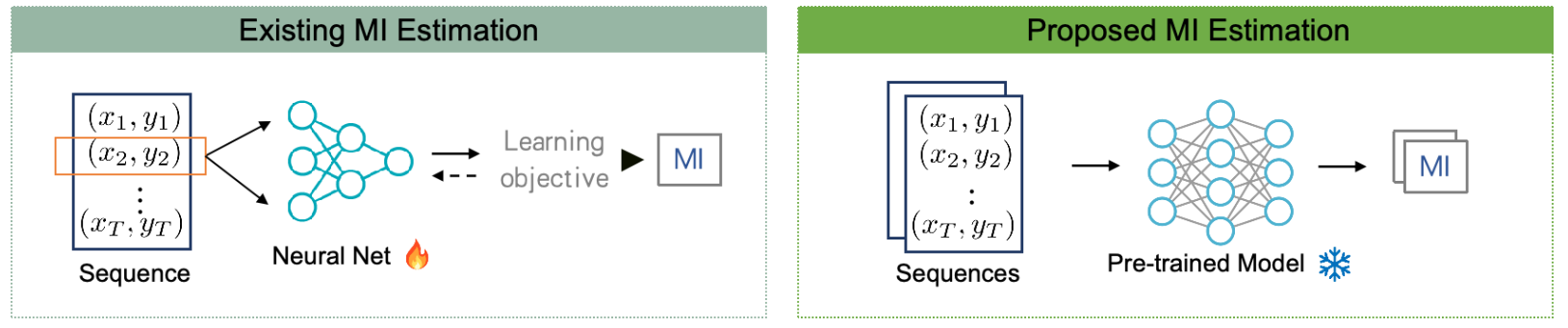}
     \caption{
     \textbf{Conceptual comparison: prior methods vs our method}.
     Existing neural MI estimators (left) requires iterative gradient-based optimization to train a neural network for each new dataset. In contrast, we uses a \emph{pre-trained} architecture to directly generate MI estimates in a single forward pass (right), eliminating per-dataset training and achieving speedup while maintaining comparable accuracy.}
     \label{fig:high-level-InfoNet}
\end{figure*}

In this work, we introduce \method, a foundation model-style architecture for fast and accurate estimation of statistical dependence between \emph{multivariate} random variables. \method predicts the strength of dependence in a single forward pass—an ability reminiscent of foundation models~\cite{hollmann2025accurate, comanici2025gemini}. This ability is acquired through large-scale pretraining on massive synthetic datasets that capture a wide range of patterns, plus a carefully design transfomer-based neural copula estimator

This ability is acquired through large-scale pretraining on massive synthetic datasets that capture a wide range of dependence structures and data patterns, enabling \method to directly infer statistical relationships without per-dataset optimization. Crucially, \method preserves full differentiability, facilitating seamless integration into larger AI pipelines. Extensive experiments demonstrate that \method generalizes effectively from synthetic benchmarks to complex real-world data, accurately capturing a broad spectrum of dependencies, being a versatile tool for rapid understanding of variable relationships. Our main contributions are:

\vspace{-0.24cm}

\begin{itemize}[leftmargin=*]
    \item We introduce \textbf{\method}, the first pretrained architecture for zero-shot estimation of mutual information between \emph{multivariate} variables. \method achieves accuracy on par with state-of-the-art neural methods without any gradient-based optimization, and flexibly handles variables of varying dimensionalities and sample sizes with a \emph{single}  model.
    \item We propose an attentive dual-path hypernetwork-based architecture, which is pretrained on large-scale synthetic datasets covering diverse dependency structures. This design enables \method to predict dependency strength in a single inference step, and generalizes effectively to unseen real-world scenarios without task-specific finetuning.
    \item We comprehensively evaluate \method on both synthetic benchmarks and real-world tasks, including independence testing, CLIP embedding analysis \citep{radford2021learning}, motion trajectory modeling and robotics manipulation. Results demonstrate its robust performance and accurate perception of a wide spectrum of dependencies.
\end{itemize}

\section{Problem Statement}
\label{sec:formulation}
In this work, we consider the problem of quantifying statistical dependence between two \emph{multivariate} random variables $\mathbf{x} \in \mathbb{R}^{d_x}$ and $\mathbf{y} \in \mathbb{R}^{d_y}$, with $d_x \geq 1$ and $d_y \geq 1$.

\paragraph{Measuring dependence via mutual information.}
Mutual information (MI) offers a principle measure for quantifying statistical dependence between multivariate variables. Unlike linear correlation coefficients that capture only linear relationships, MI effectively captures both linear and nonlinear correlations.
Formally, MI is defined as the Kullback-Leibler (KL) divergence between the joint distribution $p_{\mathbf{x},\mathbf{y}}$ and the product of marginals $p_{\mathbf{x}} \otimes p_{\mathbf{y}}$ \citep{kullback1997information}:
\begin{equation}
\begin{aligned}
    \mi(\mathbf{x},\mathbf{y}) &= \text{KL}(p_{\mathbf{x},\mathbf{y}}\|p_{\mathbf{x}} \otimes p_{\mathbf{y}}) \\ &= \int_{\mathcal{Y}}\int_{\mathcal{X}} p_{\mathbf{x},\mathbf{y}}(\mathbf{x}, \mathbf{y}) \log \left( \frac{p_{\mathbf{x},\mathbf{y}}(\mathbf{x}, \mathbf{y})}{p_{\mathbf{x}}(\mathbf{x}) p_{\mathbf{y}}(\mathbf{y})} \right) d\mathbf{x} d\mathbf{y}.
\end{aligned}
\end{equation}
Strong correlation manifests as significant divergence between $p(\mathbf{x}, \mathbf{y})$ and $p(\mathbf{x})p(\mathbf{y})$, yielding large MI, while uncorrelated variables satisfy $p(\mathbf{x}, \mathbf{y}) \approx p(\mathbf{x})p(\mathbf{y})$, resulting in MI near zero.

While MI offers a principled dependence measure,
it rarely admits closed-form solutions except for certain known distributions~\cite{czyz2023beyond, czyz2023properties}.
Thus, practical applications require estimation from finite samples $\mathcal{D} = \{\mathbf{x}^{i}, \mathbf{y}^{i}\}_{i=1}^n$ drawn from $p_{\mathbf{x},\mathbf{y}}$.
Recent advances have produced powerful neural estimators~\citep{belghazi2018mutual, duong2023diffeomorphic, franzese2023minde, tsai2020neural, poole2019variational, song2019understanding, letizia2024mutual, tsur2023neural}, with the most prominent one leveraging the Donsker-Varadhan (DV) representation~\citep{donsker1983asymptotic}:
\begin{align}
\mi(\mathbf{x},\mathbf{y}) \coloneqq \sup_{\theta} \mathbb{E}_{p_{\mathbf{x},\mathbf{y}}}[\theta] - \log(\mathbb{E}_{p_{\mathbf{x}} \otimes p_{\mathbf{y}}}[e^\theta]),
\label{eq:mi-dv}
\end{align}
where $\theta: \mathcal{X} \times \mathcal{Y} \rightarrow \mathbb{R}$ is a critic function.
Mutual Information Neural Estimation (MINE)~\citep{belghazi2018mutual} parametrizes $\theta$ as a neural network and approximates the supremum through gradient-based optimization. Besides MINE, there also exist a wide range of neural estimators based on different bounds and learning objectives; see \S\ref{sec:related}.

\paragraph{Challenge of real-time MI estimation.}
Despite differences in theory and algorithm, all existing neural estimators share a critical computational bottleneck in practice: they require training a network $\theta$ from scratch for each incoming dataset $\mathcal{D} = \{\mathbf{x}^{i}, \mathbf{y}^{i}\}_{i=1}^n$ via gradient descent:
\begin{equation}\vspace{-0.1em}
    \theta^{t+1} \leftarrow \theta^{t} - \eta \nabla_{\theta^{t}} \mathcal{L}(\theta^{t}), \quad t=1,...,T
\end{equation}
where $\mathcal{L}(\theta)$ is an estimator-specific objective (e.g., the negative DV bound for MINE). Achieving accurate MI estimates typically requires thousands of gradient steps, resulting in a $\mathcal{O}(T)$ computational complexity. This prohibitive cost limits real-time applications such as high-frequency financial correlation monitoring or large-scale genomic screening. The recent InfoNet \citep{hu2024infonet} aimed to address this inefficiency by pretraining a network to directly output optimal critic value via lookup tables, eliminating inference-time optimization.
However, InfoNet is fundamentally limited to univariate inputs, and extending  it to $d$-dimensional variables would require storing $\mathcal{O}(e^d)$ values in its lookup table, which quickly becomes intractable even for $d=8$, and it can not process data with varying data dimensionality. These limitations motivate our fundamentally different approach for real-time measurement of statistical dependence, where a unified model is developed to directly process multivariate data with varying dimensionalities and sample sizes.

\section{Method}
\label{sec:method}

We present \method, 
a pretrained architecture to address the above challenge of real-time correlation estimation 
between multivariate random variables $\mathbf{x} \in \mathbb{R}^{d_x}$ and $\mathbf{y} \in \mathbb{R}^{d_y}$. 
Unlike existing neural estimators that require iterative optimization, \method directly outputs mutual information (MI) in a single forward pass. This capability is enabled by two key innovations:
(i) a dual-path attentive hypernetwork-based architecture, which 
directly generates distribution-specific critic parameters from observed samples with varying sizes and dimensionality;
(ii) a comprehensive pre-training strategy using diverse synthetic  distributions, which ensures generalization across different application domains.

\subsection{Direct Optimal Critic Generation}

Our key innovation is to reformulate MI estimation 
from a test-time optimization problem 
into a direct inference task with the aid of a hypernetwork. Specifically, given a dataset $\mathcal{D} = \{(\mathbf{x}^i, \mathbf{y}^i)\}_{i=1}^n$ drawn from an unknown joint distribution, \method employs an attention-based hypernetwork $\mathcal{H}: \mathcal{D} \mapsto \Theta$ that directly outputs the complete parameter set $\theta^*$ of the optimal critic network in the Donsker-Varadhan representation (Eq.~\ref{eq:mi-dv}) via a single feedforward pass\footnote{
While primarily focusing on the DV representation, which we find useful for achieving good performance when paired with massive pre-training, our method is  fully compatible with other variational  estimators~\citep{song2019understanding, letizia2024mutual}.}:
\begin{equation}
    \theta^* = \mathcal{H}(\mathcal{D}) = \mathcal{H}( \{(\mathbf{x}^i, \mathbf{y}^i) \}_{i=1}^n)
\end{equation}
An empirical MI estimation is then obtained via
\begin{equation}
\hat{\mi}_{\theta}(\mathbf{x}, \mathbf{y}) 
= \frac{1}{n} \sum_{i=1}^{n} 
\theta(\mathbf{x}^i, \mathbf{y}^i) - 
\log \Big( \frac{1}{n} \sum_{j=1}^{n} e^{
\theta(\mathbf{x}^j, \mathbf{y}^{\pi(j)})}\Big),
\label{eq:empirical-mi}
\end{equation}
where $\{(\mathbf{x}^j, \mathbf{y}^{\pi(j)})\}_{j=1}^n$ 
denotes the marginal pairs with $\pi$ a random permutation of indices $\{1, ..., n\}$. This eliminates the iterative gradient updates required by neural MI estimators while avoiding the exponential value storage of InfoNet's lookup table approach. This architectural shift fundamentally changes the computational complexity from $\mathcal{O}(T)$ gradient steps, where $T$ is the number of optimization iterations, to $\mathcal{O}(1)$ feedforward propagation.

\begin{figure*}[!t]
    \centering
    \includegraphics[width=0.985\textwidth]{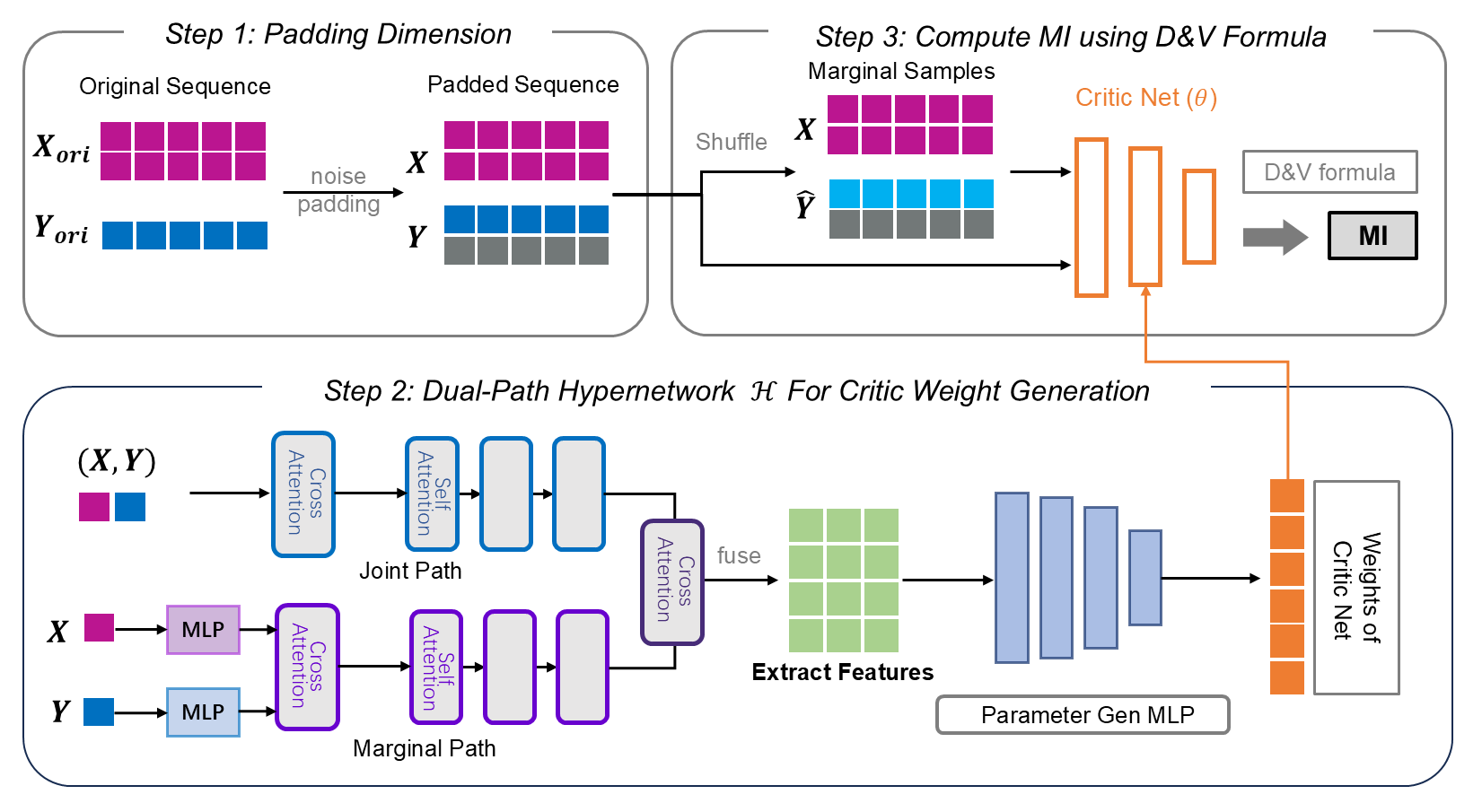}
    \caption{\textbf{The {\method} estimation pipeline}. Step 1: We pad input dimensions with noise to ensure all variables share the same  dimensionality, while allowing flexible sample sizes. Step 2: A dual-path hypernetwork $\h$—with joint and marginal branches—extracts features in alignment with the D-V formulation (Eq.~\ref{eq:mi-dv}). Cross-attention integrates these features, and a parameter-generation MLP is then used to produce the critic parameters. Step 3: The empirical D-V formula (Eq.~\ref{eq:empirical-mi}) is applied to joint and marginal samples, with marginals obtained by index permutation, to estimate MI. This pipeline enables single-pass estimation without gradient-based optimization.}
    \label{fig:architectures-mi} 
\end{figure*}

The hypernetwork $\mathcal{H}$ leverages attention~\cite{vaswani2017attention} and consists of the following key modules:

\paragraph{The joint distribution path} 
processes $n$ paired samples $\{(\mathbf{x}^i, \mathbf{y}^i)\}_{i=1}^n$ to extract correlation patterns inherent in $p(\mathbf{x}, \mathbf{y})$. 
Each sample pair is treated as a token in a sequence, enabling permutation-invariant processing through attention mechanisms. 
Specifically, a learnable query vector $\mathbf{q}_{\text{joint}} \in \mathbb{R}^{d_{\text{model}}}$ initiates cross-attention computation, where the concatenated samples $[\mathbf{x}^i; \mathbf{y}^i]$ serve simultaneously as keys and values. 
This mechanism computes attention weights $\alpha_i = \text{softmax}(\mathbf{q}_{\text{joint}}^T \mathbf{W}_K [\mathbf{x}^i; \mathbf{y}^i] / \sqrt{d_{\text{model}}})$, producing an aggregated representation that emphasizes sample pairs exhibiting strong correlations. 
The aggregated features subsequently pass through 16 self-attention layers ultimately producing a comprehensive encoding $\mathbf{h}_{\text{joint}} \in \mathbb{R}^{d_{\text{hidden}}}$ that characterizes the joint distribution's correlation structure.

\paragraph{The marginal distribution path} 
processes samples from the product of marginals $p(\mathbf{x})p(\mathbf{y})$ by breaking the pairing relationship. 
Specifically, the samples $\{\mathbf{x}^i\}_{i=1}^n$ and $\{\mathbf{y}^j\}_{j=1}^n$ are passed through separate projection networks $f_{\mathbf{x}}: \mathbb{R}^{d_x} \to \mathbb{R}^{d_{\text{proj}}}$ and $f_{\mathbf{y}}: \mathbb{R}^{d_y} \to \mathbb{R}^{d_{\text{proj}}}$, implemented as Multi-Layer Perceptrons (MLPs) to obtain higher-dimensional representations that facilitate correlation detection. 
The architecture employs bidirectional cross-attention with two sets of learnable query vectors: $\mathbf{q}_{\mathbf{x} \to \mathbf{y}}$ attends from projected $\mathbf{x}$ representations to projected $\mathbf{y}$ representations (as keys and values), while $\mathbf{q}_{\mathbf{y} \to \mathbf{x}}$ performs the reverse attention. 
The outputs from both directions are summed element-wise, capturing symmetric independence patterns that should appear when variables lack correlation. 
This combined representation is then processed by 8 self-attention layers, resulting in encoding $\mathbf{h}_{\text{marginal}} \in \mathbb{R}^{d_{\text{hidden}}}$ that provides a baseline representation against which the correlation strength can be measured.

\paragraph{The integration and generation module} 
fuses information from both distributional paths through a cross-attention mechanism that allows the joint distribution features to be modulated by marginal distribution patterns. 
In particular, we compute cross-attention where $\mathbf{h}_{\text{marginal}}$ serves as the query and $\mathbf{h}_{\text{joint}}$ provides both keys and values, producing a fused representation $\mathbf{h}_{\text{fused}} = \text{CrossAttention}(\mathbf{h}_{\text{marginal}}, \mathbf{h}_{\text{joint}}, \mathbf{h}_{\text{joint}})$. 
This asymmetric fusion ensures that correlation patterns identified in the joint path are evaluated against the independence baseline from the marginal path. 
The fused features are then processed by a parameter generation MLP serving as a nonlinear mapping from distributional features to critic network parameters. 
The MLP outputs a flattened vector $\theta \in \mathbb{R}^{|\Theta|}$ containing all parameters for a critic network, where $|\Theta| = \sum_{l=1}^{L} (d_l \times d_{l-1} + d_l)$ accounts for both weight matrices and bias terms across all layers.

\paragraph{Noise padding module} further 
addresses the challenge of varying input dimensions through a unified data preprocessing strategy that maintains MI while enabling consistent model architecture. 
For inputs with dimensions $d < D$, we pad variables with independent Gaussian noise $\mathcal{N}(0, \mathbf{I})$ to reach $D$ dimensions. 
This padding preserves mutual information exactly since $\mi(\mathbf{x}, \mathbf{y}) = \mi([\mathbf{x}; \mathbf{n}_x], [\mathbf{y}; \mathbf{n}_y])$ when the noise vectors $\mathbf{n}_x$ and $\mathbf{n}_y$ are independent of each other and independent of both $\mathbf{x}$ and $\mathbf{y}$, as justified in Proposition~\ref{prop:mi-invariance-extended}.

\subsection{Large-Scale Synthetic Pre-training}
We pretrain \method using a comprehensive spectrum of pretraining data (the `atlas'). For this purpose, we construct a meta-distribution $p(\mathcal{D})$ over datasets by systematically generating synthetic data $\mathcal{D}$ that span diverse statistical properties, drawing from the principle that a model pretrained on sufficiently diverse synthetic data can generalize effectively to real-world unseen data.

\paragraph{Diversity-driven synthetic distribution generation} Our diversity-aware data generation procedure consists of two complementary steps targeting the diversity of \emph{dependence structure} and \emph{marginal patterns} respectively:

\textit{Dependence diversity via random copula mixture}.
We ensure diversity in correlation structure by sampling from a diverse mixtures of copulas with varying dependence properties. 
Specifically, let $c_i$ be a copula chosen from a pre-defined pool $\mathcal{C}$. 
We generate samples $\mathbf{x}, \mathbf{y}$ according to:
\begin{equation}
    \mathbf{x}, \mathbf{y} \sim \sum_{i=1}^{K}\pi_i c_i, \quad c_i \in \mathcal{C}
\end{equation}
where the parameters of each copula $c_i$ and the mixture coefficients $\pi_i$ are randomly initialized.
In this work, we employ both Gaussian copulas with rich correlation matrix and Student's $t$-copulas with varying tail dependencies (see Appendix \ref{Appendix:Syn data gen}). According to recent vector copula theory~\citep{chen2025neural}, such copula mixture is a consistent estimator for the dependence structure between $\mathbf{x}$ and $\mathbf{y}$ given sufficiently large $K$. In this work, we use $K=60$ mixtures, substantially beyond $K=32$ used in~\citep{chen2025neural} for accurate approximation of dependence structure.

\textit{Marginal diversity via random flow transformation.}
To complement the correlation diversity, we enhance marginal pattern diversity through flow-based models \citep{papamakarios2021normalizing, dinh2016density}. 
Specifically, we apply two flow models $f_X: \mathbb{R}^{d_X} \to \mathbb{R}^{d_X}$, $f_Y: \mathbb{R}^{d_Y} \to \mathbb{R}^{d_Y}$ with randomly initialized parameters to transform data in each training batch:
\begin{equation}
    \mathbf{x} \leftarrow f_X(\mathbf{x}), \qquad \mathbf{y} \leftarrow f_Y(\mathbf{y}).
\end{equation}
These invertible transformations preserve mutual information while introducing complex marginal patterns, as $\mi(\mathbf{x}, \mathbf{y}) = \mi(f_{X}(\mathbf{x}), f_{Y}(\mathbf{y}))$ for any bijective function.
Additionally, we apply a differentiable copula transformation using the softrank function \citep{blondel2020fast}, which maps each marginal distribution to approximately uniform distribution $[0,1]$. This normalization enables the model to focus on learning correlation patterns rather than adapting to spurious features irrelevant to the true dependence structure.

\paragraph{Overall pre-training objective}
With comprehensive data generation, 
the parameters of the hypernetwork $\mathcal{H}$ are optimized 
through a meta-learning objective that maximizes the expected accuracy of MI estimation in the distribution of training datasets. 
Formally, we minimize:
\begin{equation}
\mathcal{L}(\mathcal{H}) = -\mathbb{E}_{\mathcal{D} \sim p(\mathcal{D})} \left[ \hat{\mi}_{\mathcal{H}(\mathcal{D})}(\mathbf{x}_{\mathcal{D}}, \mathbf{y}_{\mathcal{D}}) \right],
\label{eq:meta-objective}
\end{equation}
where $p(\mathcal{D})$ represents the meta-distribution over datasets induced by our synthetic generation process, 
$\mathcal{H}(\mathcal{D})$ outputs the critic parameters for dataset $\mathcal{D}$, and $\hat{\mi}_{\theta}(\mathbf{x}_{\mathcal{D}}, \mathbf{y}_{\mathcal{D}})$ is the empirical MI estimate using critic parameters $\theta$ as in Eq.~\ref{eq:empirical-mi}.

Proposition~\ref{prop:pop-correct} establishes that under mild conditions, the above learning objective yields a \emph{consistent} estimate to the ground truth MI for all $\D$ such that $p(\D) > 0$, thereby guaranteeing convergence to the optimal critic $\theta^*$.

We highlight two key differences between the above pre-training pipeline and conventional MI estimator training.
First, unlike standard approaches, our synthetic pre-training can generate \emph{unlimited} training datasets, yielding a theoretically infinite sample size per each dataset and a large batch size. Second, our pre-training amortizes learning across dataset through a centralized hypernetwork $\mathcal{H}$, enabling knowledge acquired from one dataset to effectively transfer to others. Together, we alleviate known failure modes of conventional neural MI estimation, such as high estimation variances and biases caused by insufficient  samples per dataset~\citep{mcallester2020formal, song2019understanding}.

\section{Scaling to High Dimensions via Slicing}

\begin{figure*}[!t]
    \centering
    \begin{subfigure}{.32\textwidth}
        \centering
        \includegraphics[width=\textwidth]{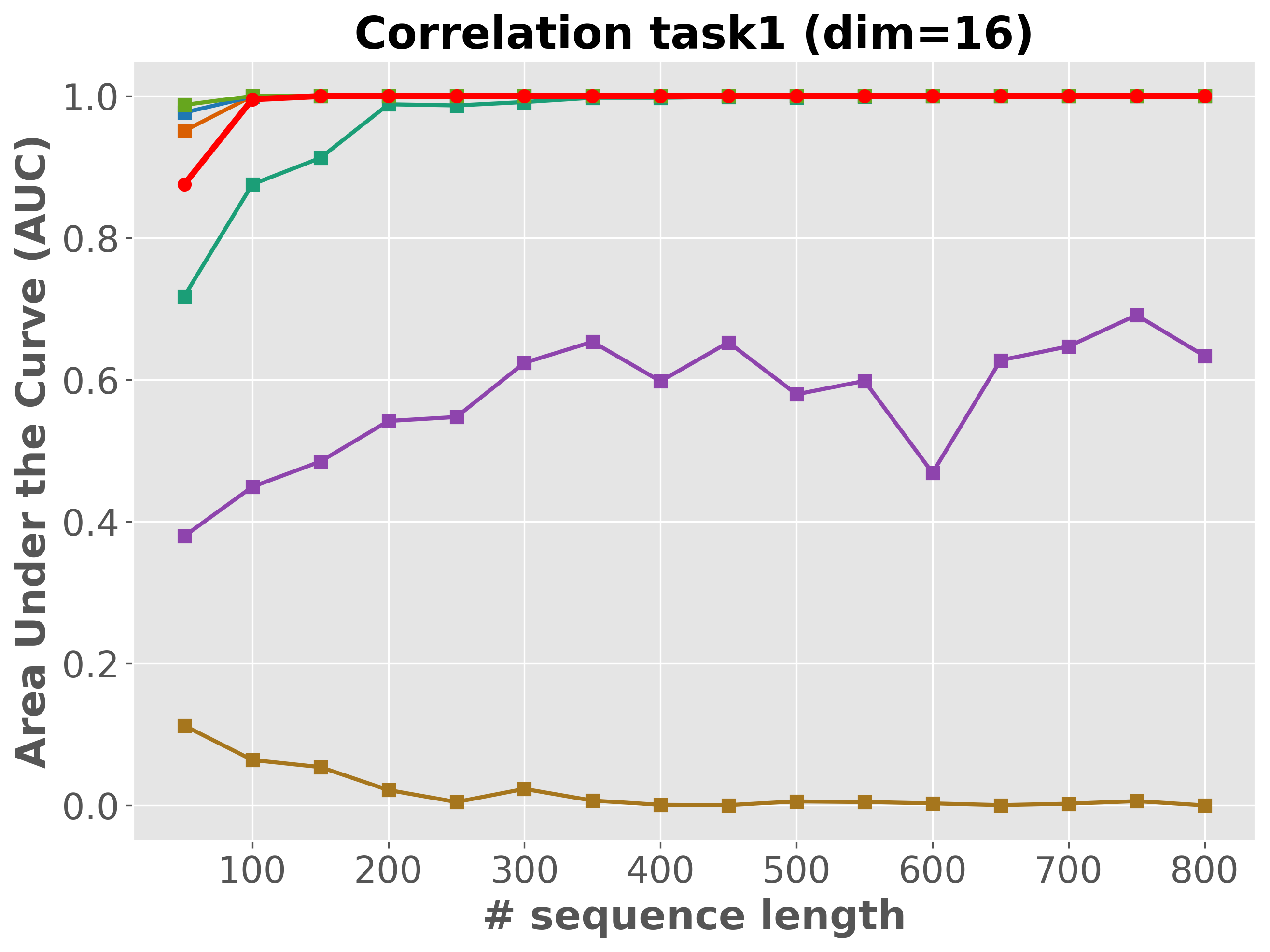}
        \vspace{-2mm}
    \end{subfigure}
    \begin{subfigure}{.32\textwidth}
        \centering
        \includegraphics[width=\textwidth]{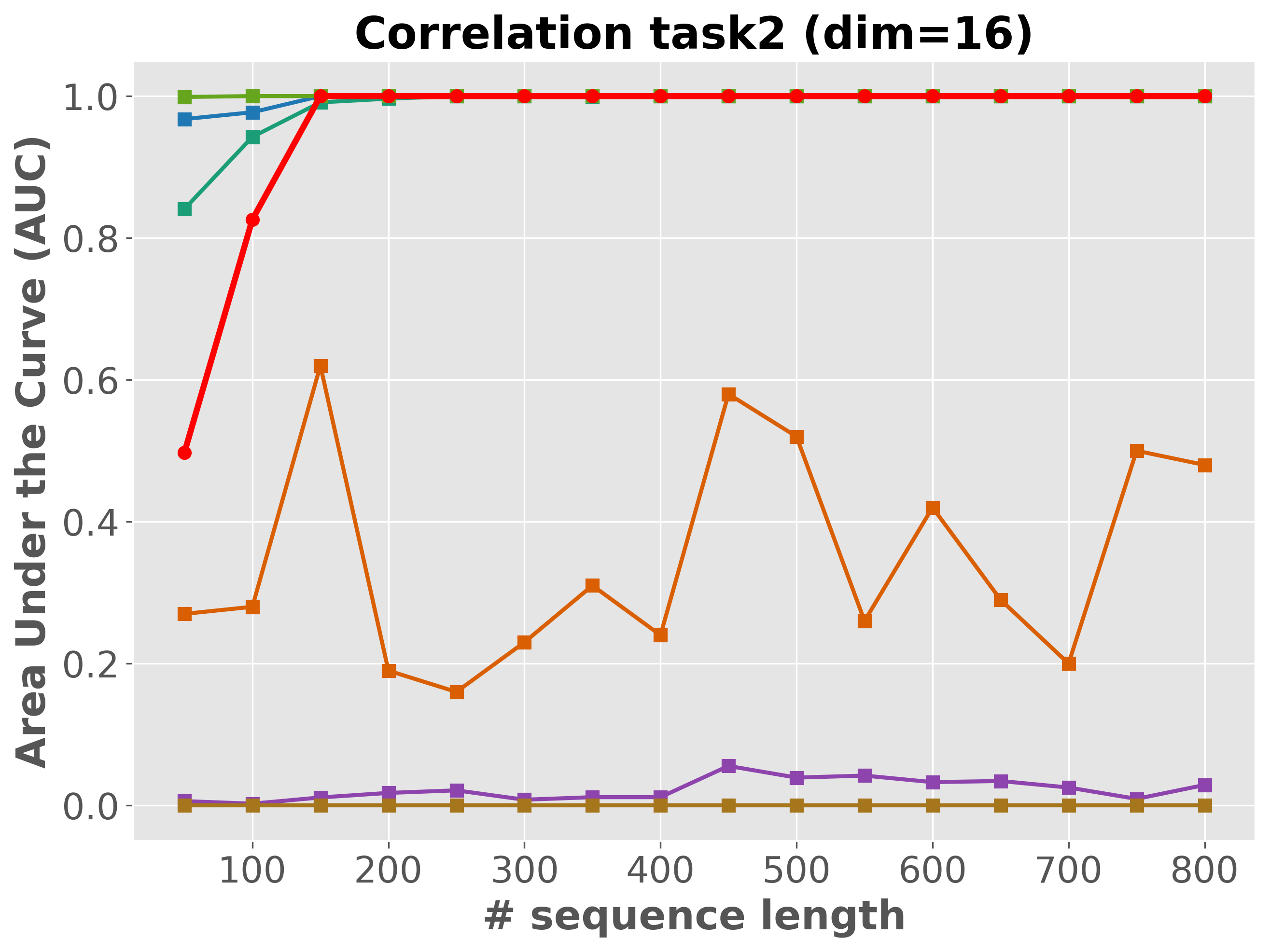}
        \vspace{-2mm}
    \end{subfigure}
    \begin{subfigure}{.32\textwidth}
        \centering
        \includegraphics[width=\textwidth]{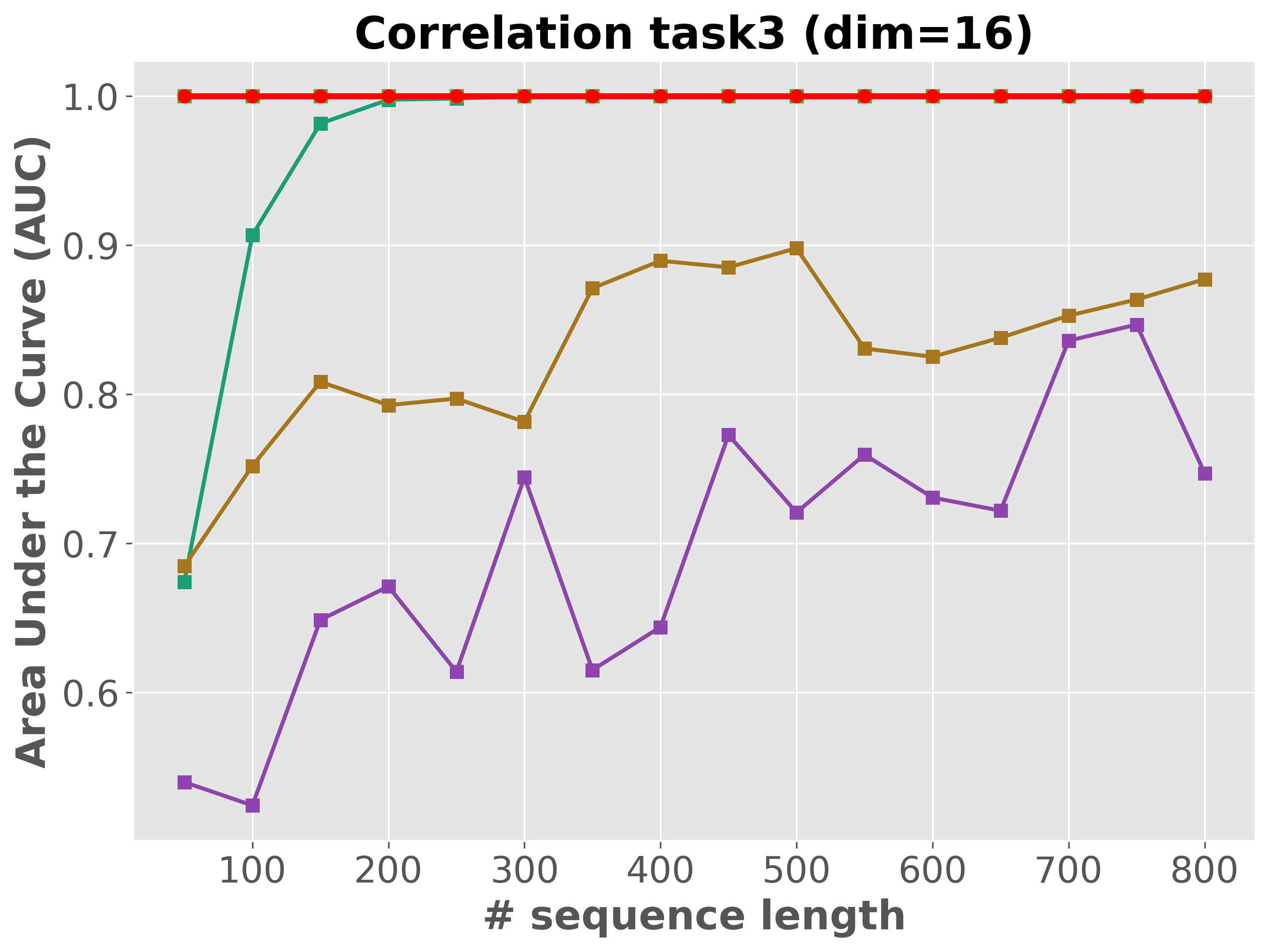}
        \vspace{-2mm}
    \end{subfigure}

    \begin{subfigure}{.32\textwidth}
        \centering
        \includegraphics[width=\textwidth]{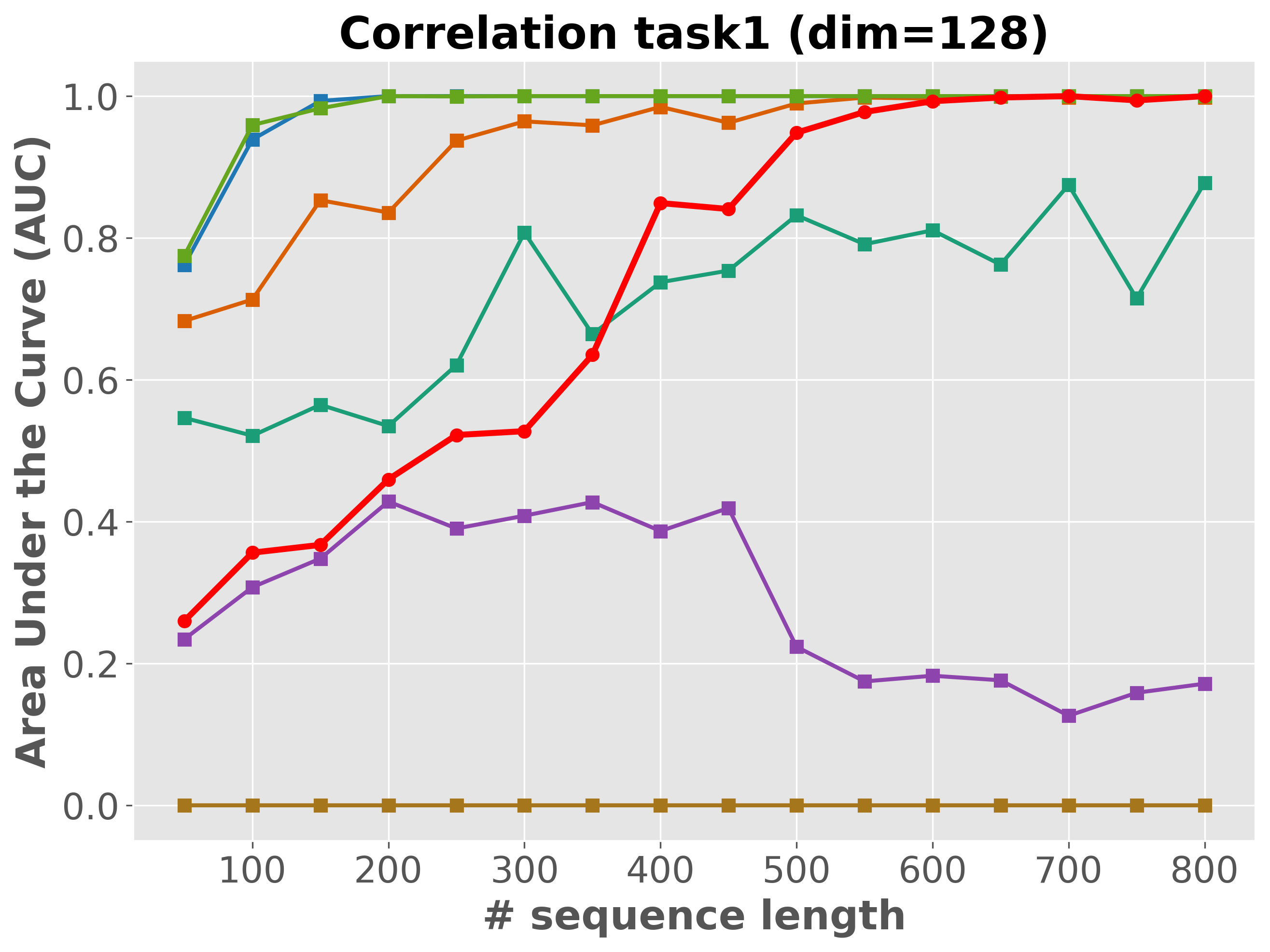}
        \vspace{-2mm}
    \end{subfigure}
    \begin{subfigure}{.32\textwidth}
        \centering
        \includegraphics[width=\textwidth]{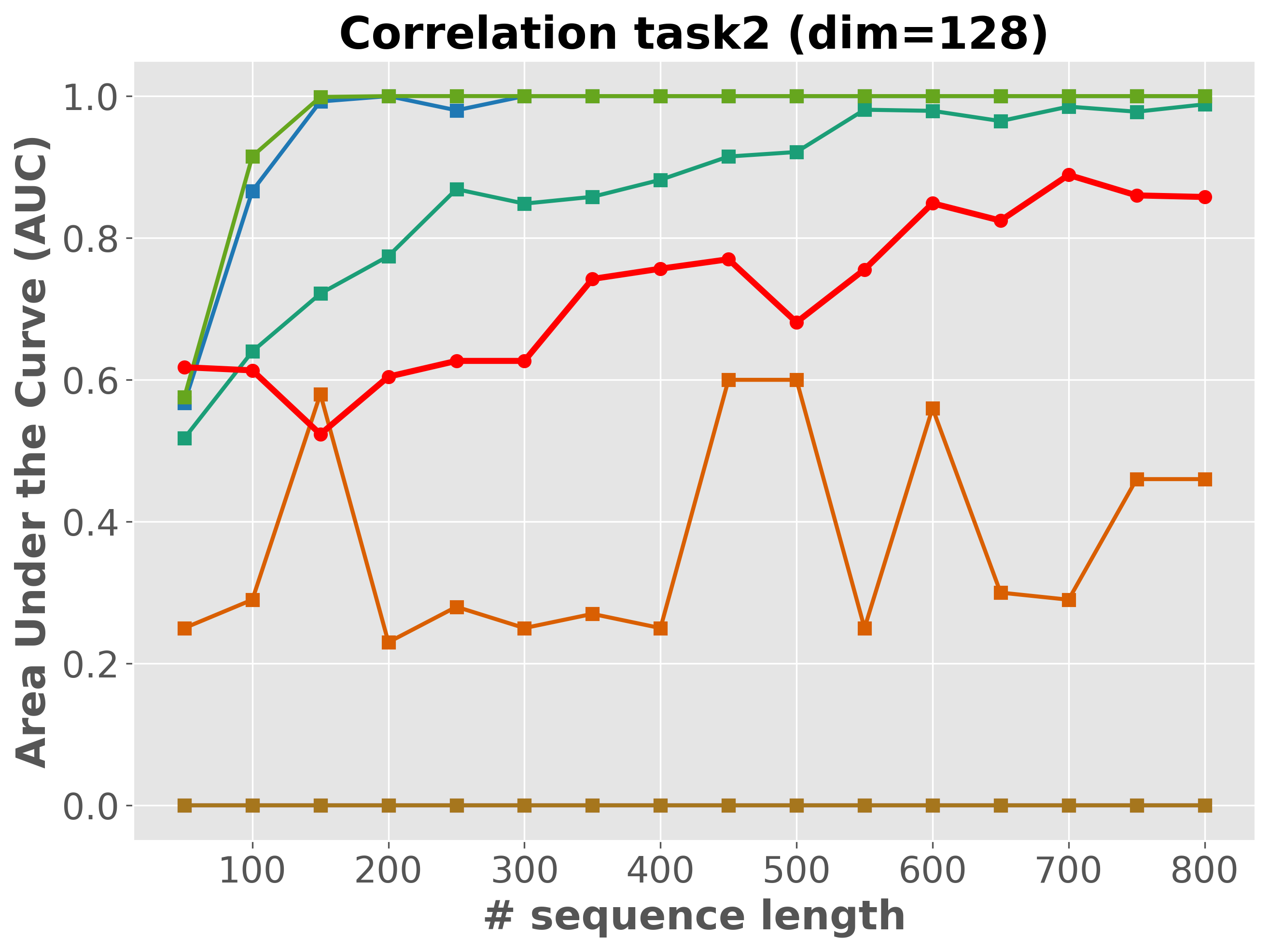}
        \vspace{-2mm}
    \end{subfigure}
    \begin{subfigure}{.32\textwidth}
        \centering
        \includegraphics[width=\textwidth]{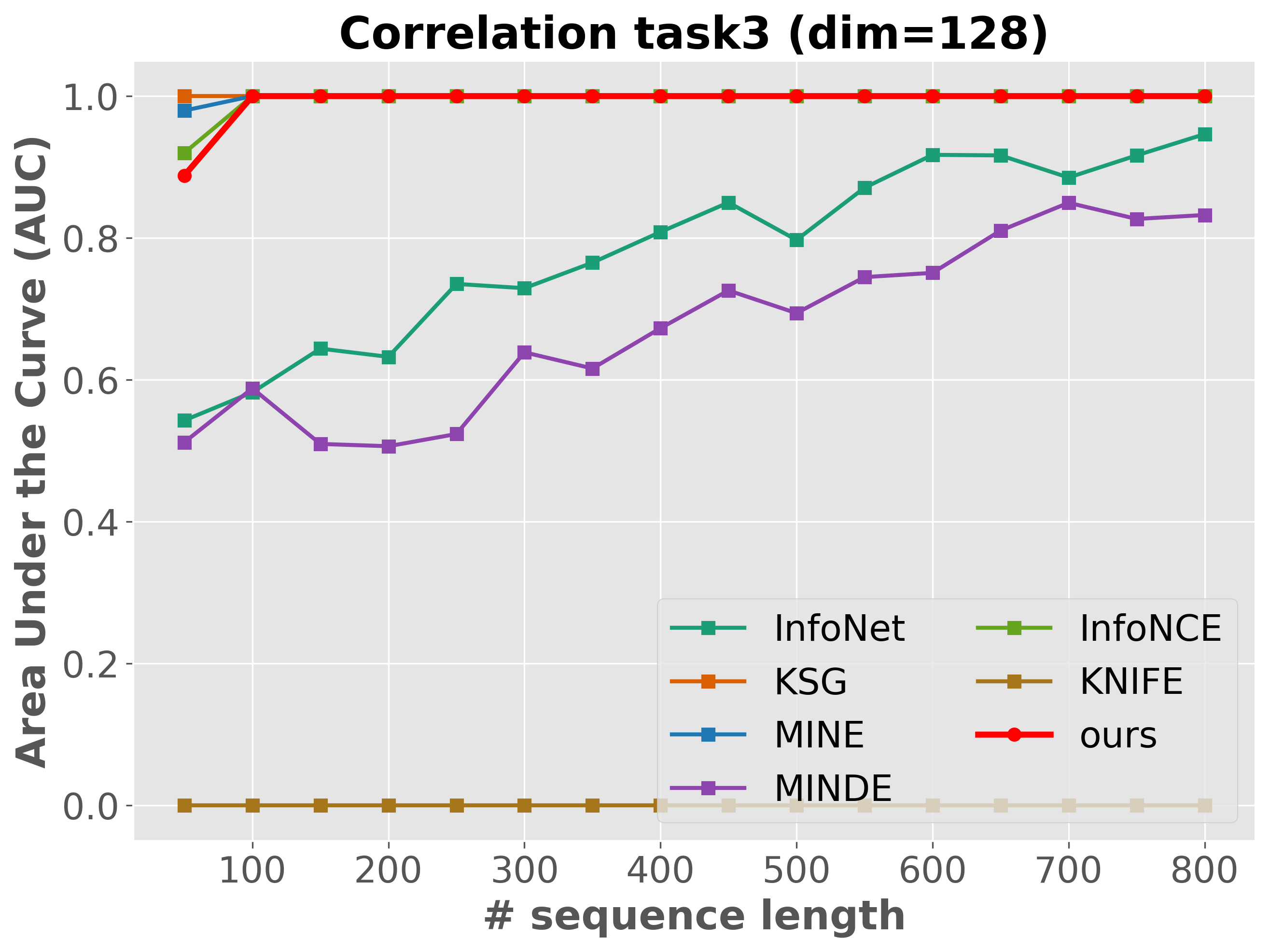}
        \vspace{-2mm}
    \end{subfigure}

    \caption{\textbf{Independence testing under three types of data correlations}. Each curve depicts the area under the curve (AUC) of the receiver operating characteristic (ROC) with respect to sequence length $n$. Seven MI estimators are compared: \method, InfoNet, KSG, MINE, MINDE, InfoNCE and KNIFE. \method uses 5-sliced MI with 32 slices, while InfoNet adopts 1-sliced MI with 128 slices. }
    \label{fig:independent-test}
\end{figure*}

\method natively supports multivariate inputs up to a predefined dimensionality $D$. To scale to higher-dimensional data with $d > D$, we leverage sliced mutual information (sliced MI), which estimates high-dimensional statistical dependence by aggregating information across multiple low-dimensional projections, referred to as ``slices''.

\paragraph{Slices of high-dimensional dependence}
Our intuition is as follows. In a geographical atlas, the information of a specific region may be characterized by different aspects, such as topology, climate or population. Similarly, in a genetic atlas, different slices may reveal gene activity across tissues, cell types, or spatial locations. Each individual view is necessarily partial, but they together complementarily provide a rich characterization of the underlying object.

We apply the same principle to high-dimensional dependence estimation. Rather than estimating MI directly in the original high-dimensional space, we examine many low-dimensional projections of the variables. Each projection captures one specific ``view'' of the original dependence structure, and aggregating over many such views yields an informative summary of the overall statistical dependence.

Formally, the ($k$-)sliced MI under slicing dimensionality $k$ is defined as~\cite{goldfeld2022k}:
\begin{equation}
\begin{aligned}
\smi_k(\mathbf{x};\mathbf{y})
&=
\mathbb{E}_{\mathbf{P},\mathbf{P'}}
\left[
\mi(\mathbf{P}^{\top}\mathbf{x};\mathbf{P}'^{\top}\mathbf{y})
\right], \\
&\approx \frac{1}{S}
\sum_{j=1}^{S}
\mi(\mathbf{P}_j^{\top}\mathbf{x};\mathbf{P}'^{\top}_j\mathbf{y})
\end{aligned}
\label{eq:k-smi}
\end{equation}
where $\mathbf{P}\in\mathrm{St}(d_x,k)$ and $\mathbf{P'}\in\mathrm{St}(d_y,k)$ are random orthonormal projection matrices sampled from the Stiefel manifolds. Note that single-sided slicing could also be used: $\smi'_k(\mathbf{x};\mathbf{y}) = 
\mathbb{E}_{\mathbf{P}}
\left[
\mi(\mathbf{P}^{\top}\mathbf{x};\mathbf{y})
\right]$.

Sliced MI has been shown to be an effective measure for quantifying high-dimensional  dependence, particularly under moderately large $k$ and $S$. It inherits several key properties of MI, including: (a) $\mi = 0 \Leftrightarrow \smi = 0$~\cite{goldfeld2021sliced}, and (b) maximizing $\mi$ is equivalent to maximizing $\smi$ under single-sided slicing~\cite{chen2023learning}. These properties make sliced MI a theoretically grounded and practically useful alternative to full MI in applications such as independence testing~\cite{tsur2023max, goldfeld2022k} and representation learning~\cite{chen2023learning, zhou2024maxmi}. In many applications, exact MI values is often not of interests; the relative strength of dependence as captured by sliced MI is already highly informative.

\paragraph{Batched inference across slices}
\method is particularly well suited for sliced MI estimation due to its ability to process multiple slices \emph{in parallel}. Existing neural MI estimators typically require a separate optimization procedure for each projection direction, leading to a time complexity of $O(ST)$, where $T$ denotes the optimization cost, e.g., the number of gradient steps. In contrast, the transformer-based architecture of \method allows multiple projected datasets to be packaged into a single batch and processed jointly through one feedforward pass, analogous to how large language models process multiple sequences simultaneously:
\begin{equation}
    \{\theta_1^*, ... \theta^*_S\} = \mathcal{H}(\{\mathcal{D}_j\}^S_{j=1}) 
\end{equation}
where $\theta^{*}_j$ denotes the optimal critic predicted by \method for the $j$-th slicing direction and $\mathcal{D}_j =\{\mathbf{P}_j\mathbf{x}^i, \mathbf{P}_j'\mathbf{y}^i \}^n_{i=1}$ denotes the projected dataset for that direction. This way, we reduce time complexity from $O(ST)$ to $O(1)$, facilitating highly efficient computation of sliced MI.

We note that while theoretically and empirically grounded, sliced MI is not a drop-in replacement for full MI: slicing may miss certain dependence structures, and finite-$S$ averaging can miss rare but informative projection directions. We discuss the failure cases of slicing, together with the effects of $k$, $S$ in Appendix~\ref{app:smi-failures} and ~\ref{app:sensitivity}. Nevertheless, as our slicing-based method strikes a favorable accuracy--efficiency trade-off across a broad range of settings, including high-dimensional real-world tasks, we consider it a practical and scalable approach for statistical dependence measurement.

\section{Experiments}
\label{sec:exps}

\subsection{Setups}


\textbf{Slicing}. \method is pretrained for data with dimensionality up to $D=20$. For high-dimensional inputs where $d > 20$, we employ $k$-sliced mutual information \citep{goldfeld2022k}, which projects the data onto $k$-dimensional random subspaces and averages the MI estimates across projections. 
Specifically, we compute $\hat{\smi}_k(\mathbf{x}, \mathbf{y}) = \frac{1}{S}\sum_{i=1}^{S} \hat{\text{I}}(\mathbf{P}_i\mathbf{x}, \mathbf{P}'_i\mathbf{y})$ where $\mathbf{P}_i, \mathbf{P}'_i \in \mathbb{R}^{k \times d}$ are random projection matrices. 
This approach preserves substantially more correlation structure than 1-dimensional slicing used in previous work~\citep{hu2024infonet}. For other neural methods such as MINDE, we do not employ slicing, as this will involve training $S$ networks for $S$ slicing directions, being computationally prohibitive.

\vspace{0.1cm}

\textbf{Baselines}. We consider seven MI estimation methods:
KSG \citep{kraskov2004estimating} and
KDE \citep{silverman2018density}, two classic non-parametric MI estimators that do not require costly iterative optimization;
KNIFE \citep{pichler2022differential}, which uses KDE with learnable parameters in MI estimate, and neural methods MINE \citep{belghazi2018mutual}, InfoNCE \citep{oord2018representation}, and MINDE \citep{franzese2023minde},
all of which require training a network from scratch for each new distribution. We also compare to the pretrained InfoNet~\citep{hu2024infonet} method whenever appropriate. InfoNet \citep{hu2024infonet} is restricted to one dimension,
thus requiring using slicing for data dimension beyond $D=1$.


\begin{table*}[!ht]
\centering
\caption{\textbf{Sanity check with BMI benchmark}. We compare \method with other MI estimators on 8 representative tasks
from \citep{czyz2023beyond} with known ground truth MI.
Each estimate represents the average over 10 random seeds
with $N = 5000$ samples per task.
Task notation indicates distribution type (Mn=Multivariate normal, St=Student-$t$, Asinh=Arc sinh, Uniform=correlated uniform, Hc=Half cube) and corresponding parameters, with the first two digits indicating dimensionality of $\rvx$ and $\rvy$ respectively.
Methods are color-coded: neural-based methods in {\color{asparagus}green} and non-neural methods in {\color{ballblue}blue}.
\textbf{Bold} indicates closest to ground truth,
while \underline{underlined} values show second-best estimates.
The rightmost column shows execution time in seconds to compare computational efficiency.
}
\vspace{1mm}
\label{tab:bmi}
\begin{tabular}{l *{8}{c} c}
\toprule
 & \multicolumn{8}{c}{Tasks} & \\
\cmidrule(lr){2-9}
\multirow{1}{*}{Method*} &
\makecell{Mn-dense\\5-5-0.5} &
\makecell{Spiral\\3-3-2-2.0} &
\makecell{Asinh@St\\5-5-2} &
\makecell{St\\3-3-3} &
\makecell{Uniform\\3-3-2-2.0} &
\makecell{Hc@Mn\\5-5-2} &
\makecell{Additive\\1-1-0.1} &
\makecell{Bimodal\\1-1-0.75} &
\multirow{1}{*}{Time (s)} \\
\midrule
\textit{GT}  & 0.59  & 1.02 & 0.45 & 0.18 & 1.02 & 1.02 & 1.71 & 0.41 & -- \\
\midrule
{\color{ballblue}KSG}  & 0.54 & 0.75 & 0.25  & 0.07 & 0.79 & 0.58 & 1.61 & \textbf{0.41} & \underline{0.13} \\
{\color{ballblue}KDE}  & 1.59 & 2.87 & 2.43  & 2.36 & 1.17 & 2.23 & 2.94 & 1.23 & 2.04 \\
{\color{asparagus}MINE}  & \textbf{0.60} & \textbf{1.00} & 0.53  & \underline{0.21} & \textbf{1.03} & \underline{1.06} & \textbf{1.63} & 0.39 & 25.9 \\
{\color{asparagus}MINE-5s}  & \textbf{0.60} & 0.90 &  0.33 & 0.15 & 0.93 & \underline{1.06} & 1.61 & 0.38 & 4.92 \\
{\color{asparagus}MINDE} & \textbf{0.58} & 0.92 & \textbf{0.43}  & 0.36 & 0.89 & \textbf{1.01} & 1.42 & 0.50 & 34.2 \\
{\color{asparagus}InfoNCE} & \underline{0.56} & \underline{0.98} & \underline{0.49}  & \textbf{0.18} & \underline{0.97} & \textbf{1.03} & \underline{1.62} & \underline{0.40} & 67.6 \\
{\color{asparagus}KNIFE} & 0.93 & 0.10 & 0.66  & 0.50 & 0.07 & 0.92 & 0.05 & 0.65 & 48.4 \\
{\color{asparagus}\method} & \textbf{0.60} & 0.89 & \underline{0.41} & \underline{0.21} & 0.93 & 0.96 & 1.46 & 0.39 & \textbf{0.09} \\
\bottomrule
\multicolumn{10}{c}{\footnotesize *We exclude InfoNet on this task, as InfoNet cannot output exact MI for data beyond 1D.} \\
\end{tabular}
\end{table*}

\subsection{Results}

\paragraph{High-dimensional independence testing.} \label{paragraph:independence-test}

We first evaluate \method on its ability to accurately discriminate varying levels of statistical dependency between pairs of random variables in high-dimensional settings. Following the setup in~\citep{goldfeld2022k}, we consider various correlation types. For each type, we generate two populations of paired variables: one with no statistical dependence and another with non-trivial dependence. The goal is to evaluate how well the estimated dependency scores separate the two populations. Performance is measured using the area under the precision-recall curve (AUC). All results are the average of 10 independent trials.

As shown in Fig.~\ref{fig:independent-test}, \method consistently exhibits strong test power in assessing statistical dependence in high-dimensional settings, particularly when the sample size exceeds 500 (with the exception of correlation type 2 at dimensionality 128). In these settings, \method achieves performance comparable to gradient-based neural estimators such as MINE, InfoNCE, and MINDE, despite requiring no gradient-based optimization. Compared to InfoNet, \method attains comparable or superior performance in most cases, especially at moderate-to-large sample sizes (e.g., $n \geq 500$).

\vspace{0.125cm}

Overall, these results show that \method can reliably quantify high-dimensional dependence in a zero-shot manner, suggesting its potential for efficient dependence analysis.

\paragraph{Sanity check on benchmark with known MI.} \label{paragraph:bmi}
We next consider the benchmarks proposed in \citep{czyz2023beyond},
where we select 8 representative tasks with analytically derived ground-truth MI.
The test distributions exhibit diverse statistical patterns, ranging from complex Spiral transformation to heavy-tailed Student-$t$ distributions. We generate 5,000 samples for each task.

As presented in Table~\ref{tab:bmi},
the MI values predicted by \method closely align with the ground-truth MI across all tasks.
In all tasks,
\method achieves comparable accuracy to the best neural baselines, while being approximately 300$\times$ faster. Compared to KSG and KDE, which requires no neural network training, \method  offers a clear advantage in accuracy.

\begin{figure*}[t!]
    \centering
    \begin{minipage}{0.277\textwidth}
        \centering
        \includegraphics[width=\textwidth]{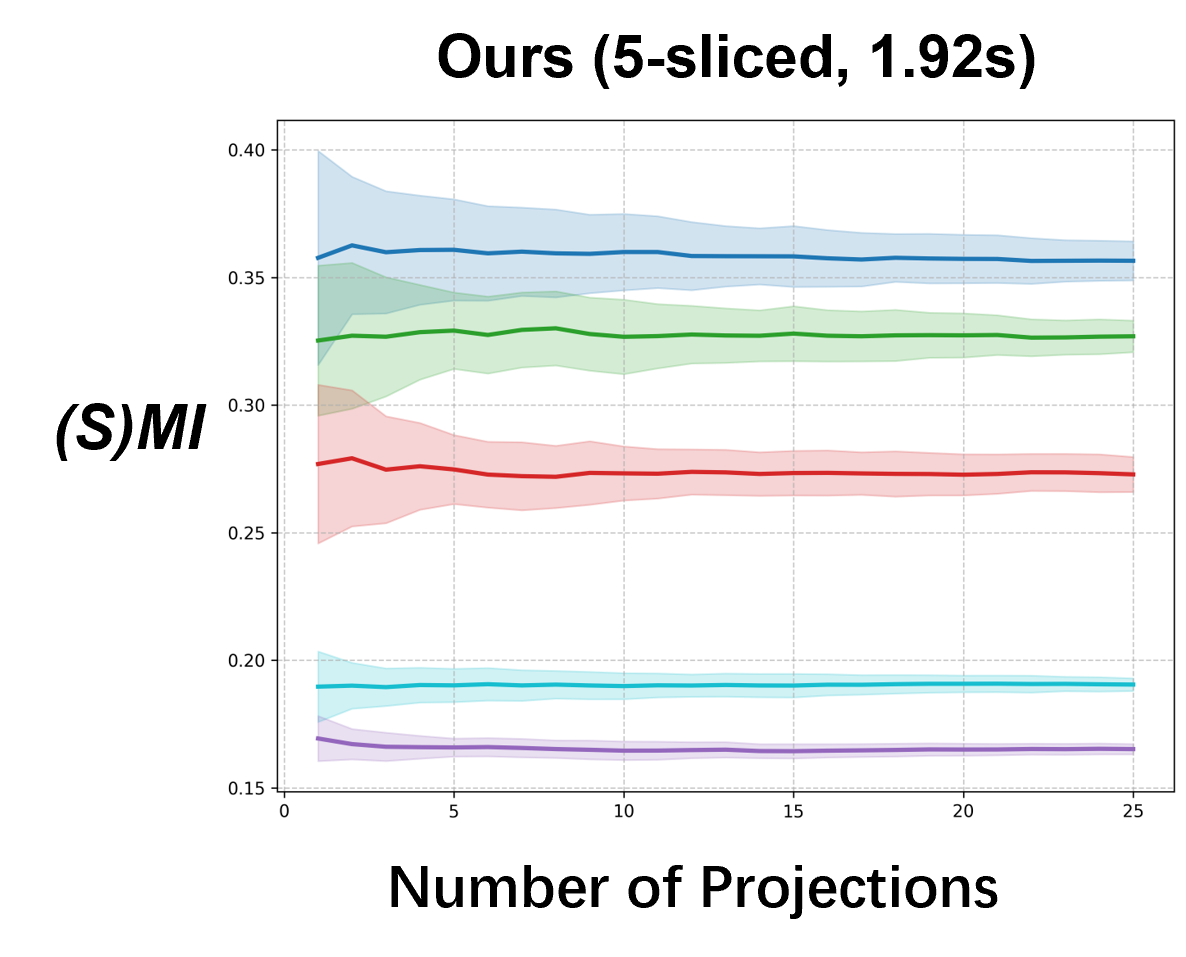}
    \end{minipage}%
    \hfill
    \begin{minipage}{0.228\textwidth}
        \centering
        \includegraphics[width=\linewidth]{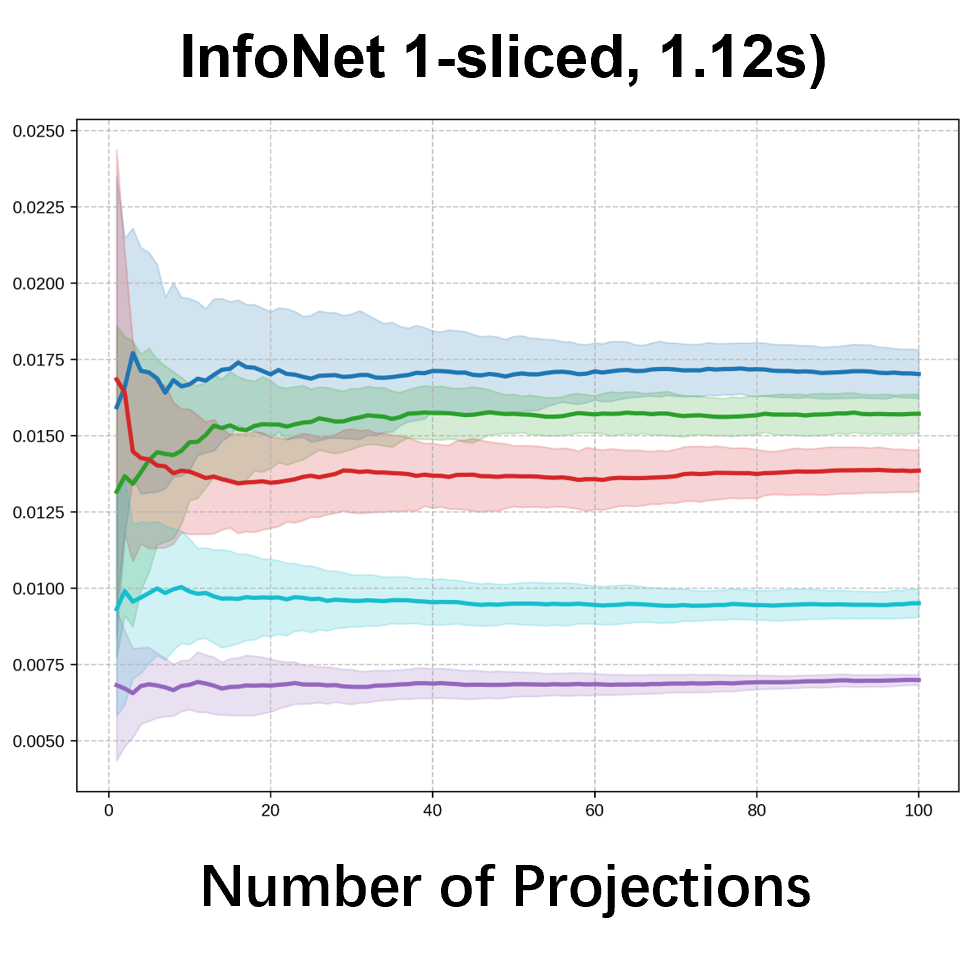}
    \end{minipage}%
    \hfill
    \begin{minipage}{0.228\textwidth}
        \centering
        \includegraphics[width=\linewidth]{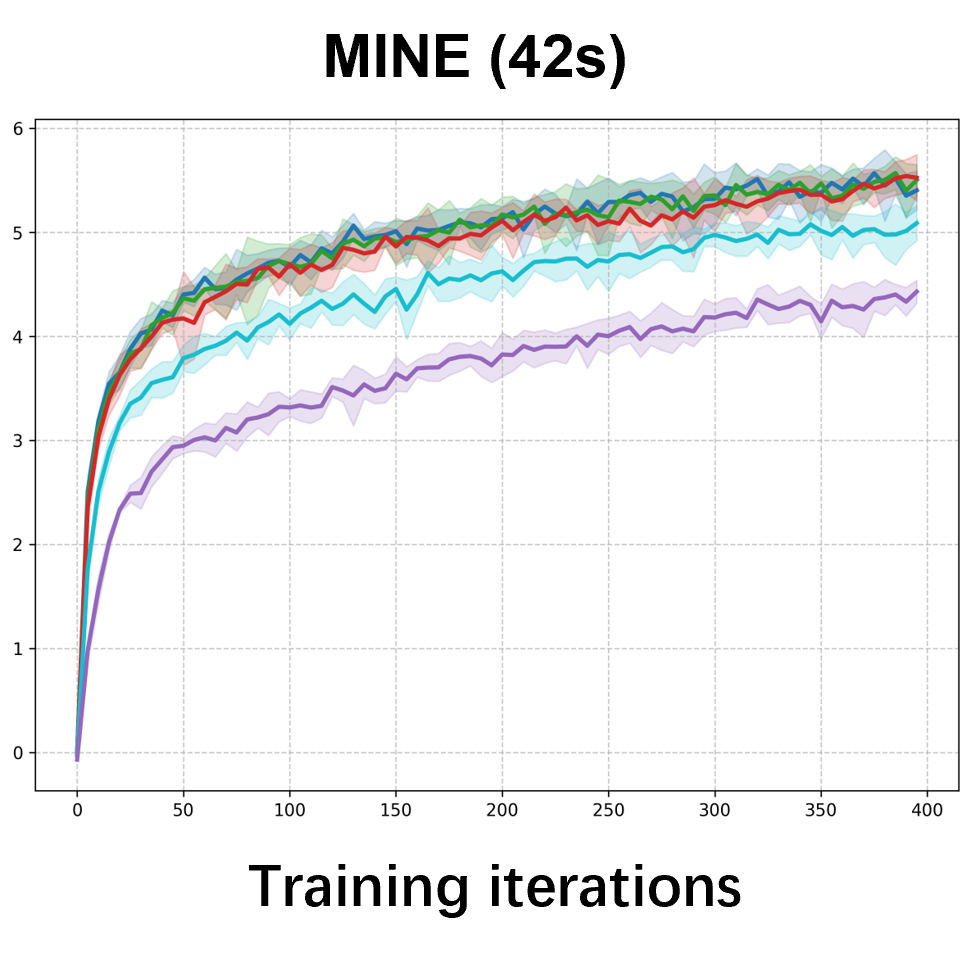}
    \end{minipage}%
    \hfill
    \begin{minipage}{0.228\textwidth}
        \centering
        \includegraphics[width=\linewidth]{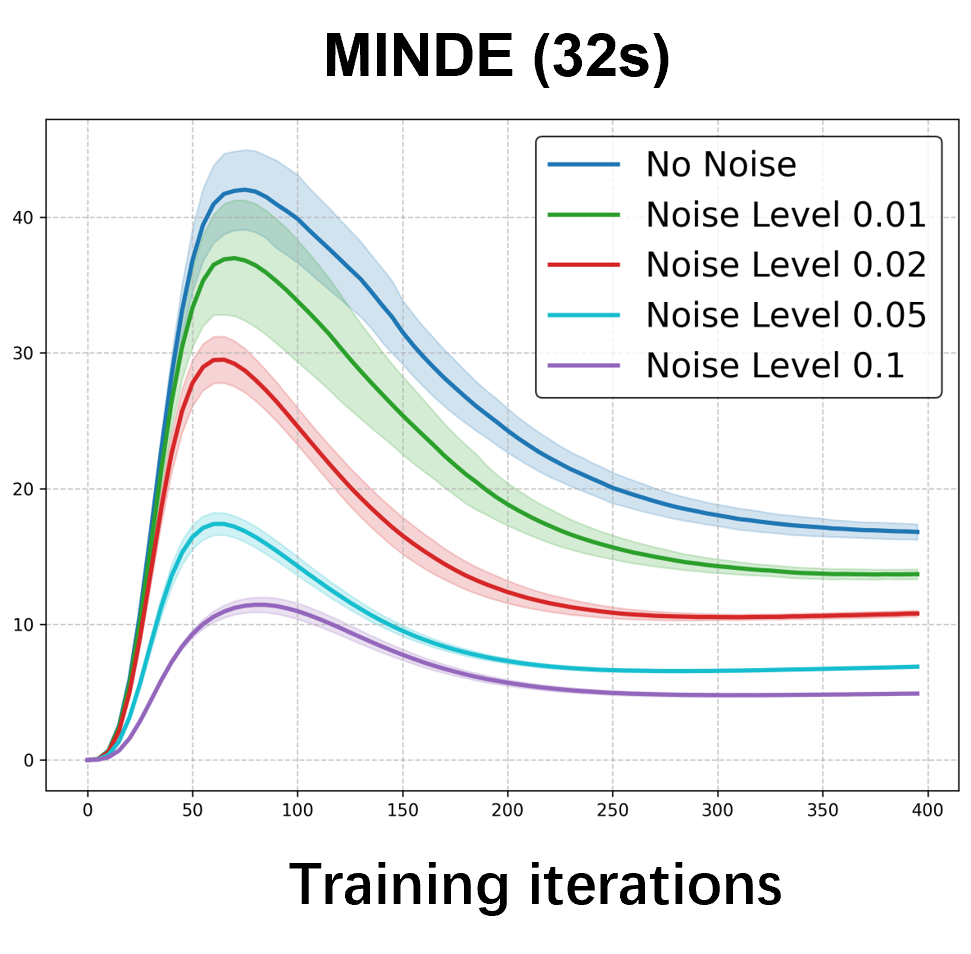}
    \end{minipage}
    \caption{\textbf{Comparing different methods
    on 512-dimensional CLIP-encoded image-text representations
    across five noise levels}.
    The light-colored areas indicate error bounds from 20 repeated experiments.
    \textbf{(Left to right)} \method with 5-sliced MI using $S=25$ random projections;
    InfoNet with 1-sliced MI using more projections (up to $S=128$);
    MINE and MINDE estimating original MI via gradient-based optimization.
    \method demonstrates superior noise level discrimination with clearly separated error bounds, while maintaining significantly faster computation time (noted in parentheses) compared to neural-based alternatives.
    Slicing are not employed in MINE and MINDE, as this will require training $S$ different networks for $S$ slicing directions and does not contribute to efficiency. }
    \label{fig:clip_compare}
\end{figure*}

\paragraph{CLIP-based image-text embedding analysis}
The CLIP model \citep{radford2021learning} encodes images and text into a shared feature space, enabling robust cross-modal understanding by measuring similarity. Here, we assess the correlation between images and their corresponding text annotations by estimating the MI between their latent representations encoded by the pre-trained CLIP model.

We utilize the COCO Captions dataset \citep{chen2015microsoft},
selecting 33,000 image-caption pairs and encoding them into 512-dimensional feature vectors using CLIP.
By systematically introducing Gaussian noises to the data,
we create conditions where statistical dependence naturally decreases. Our objective is to evaluate whether different estimators can effectively detect these changes with high sensitivity -- a spirit similar to the self-consistency test~\citep{song2019understanding}. For each noise level,
we conduct 20 experiments, and we report both the mean and the standard deviation.

Our results in Figure~\ref{fig:clip_compare} demonstrate that \method achieves a strong performance in detecting noise fluctuations, yielding clearly separated error bounds across different noise levels and hence high sensitivity w.r.t the dependence strengths, while being substantially more efficient than alternative approaches.  InfoNet~\citep{hu2024infonet} is the only method comparable in efficiency, but its accuracy is much worse than our method due to its reliance on 1-slicing, which discards a large amount of information despite using more slicing directions.

\paragraph{Real-world motion trajectory modeling}
To assess \method's generalization ability to accurately capture complex real-world relationships, we utilize the PointOdyssey dataset~\citep{zheng2023pointodyssey}, which contains multi-dimensional ground-truth motion trajectories of points on objects across video frames. In this task, a reference point \(P^*\) is selected, and we estimate the mutual information \(\mi(\text{trajectory}(P^*), \text{trajectory}(P))\) between its trajectory and those of all other points \(P\) in the video. Since points on the same object \(O\) typically exhibit stronger spatial correlations in their trajectories than those on different objects, the following relationship is expected (where $t$ is a threshold):
\[
\begin{cases}
\mi(\text{trajectory}(P^*), \text{trajectory}(P)) > t & \text{if } P^*, P \in O \\
\mi(\text{trajectory}(P^*), \text{trajectory}(P)) \leq t  & \text{if }  P^* \in O, P \notin O
\end{cases}
\]
An accurate mutual information estimator should correctly reflect this relationship. Figure~\ref{fig:motion-3subfig} visualizes the raw MI estimates between the reference point and all other points. \method successfully identifies points that belong to the same object by flagging a high MI value, demonstrating its effectiveness in modeling complex spatial dependencies in real-world motion data. Remarkably, \method delivers results for all point pairs within only a few seconds.

\begin{figure*}[t!]
    \centering

    \newcommand{\IMGH}{2.8cm}   
    \newcommand{\SHIFT}{-2.6mm}  

    \begin{subfigure}[t]{0.31\textwidth}
        \centering
        \begin{minipage}[t][\IMGH][t]{\textwidth}
            \centering
            \includegraphics[width=\textwidth]{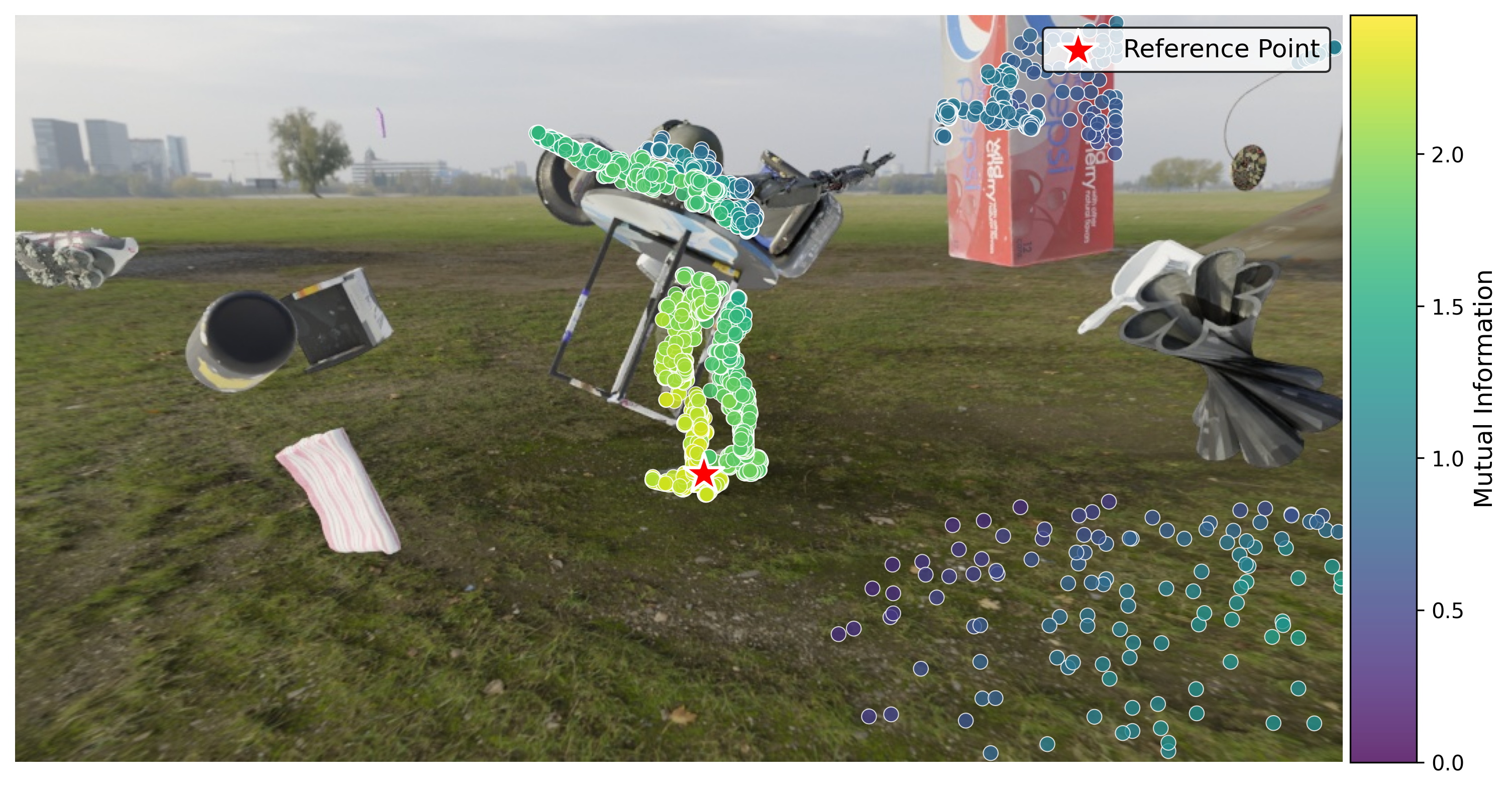}
        \end{minipage}
        \caption{MI heatmap w.r.t.\ reference point 1 ($\star$).}
        \label{fig:video_mi1}
    \end{subfigure}\hfill%
    \begin{subfigure}[t]{0.31\textwidth}
        \centering
        \begin{minipage}[t][\IMGH][t]{\textwidth}
            \centering
            \includegraphics[width=\textwidth]{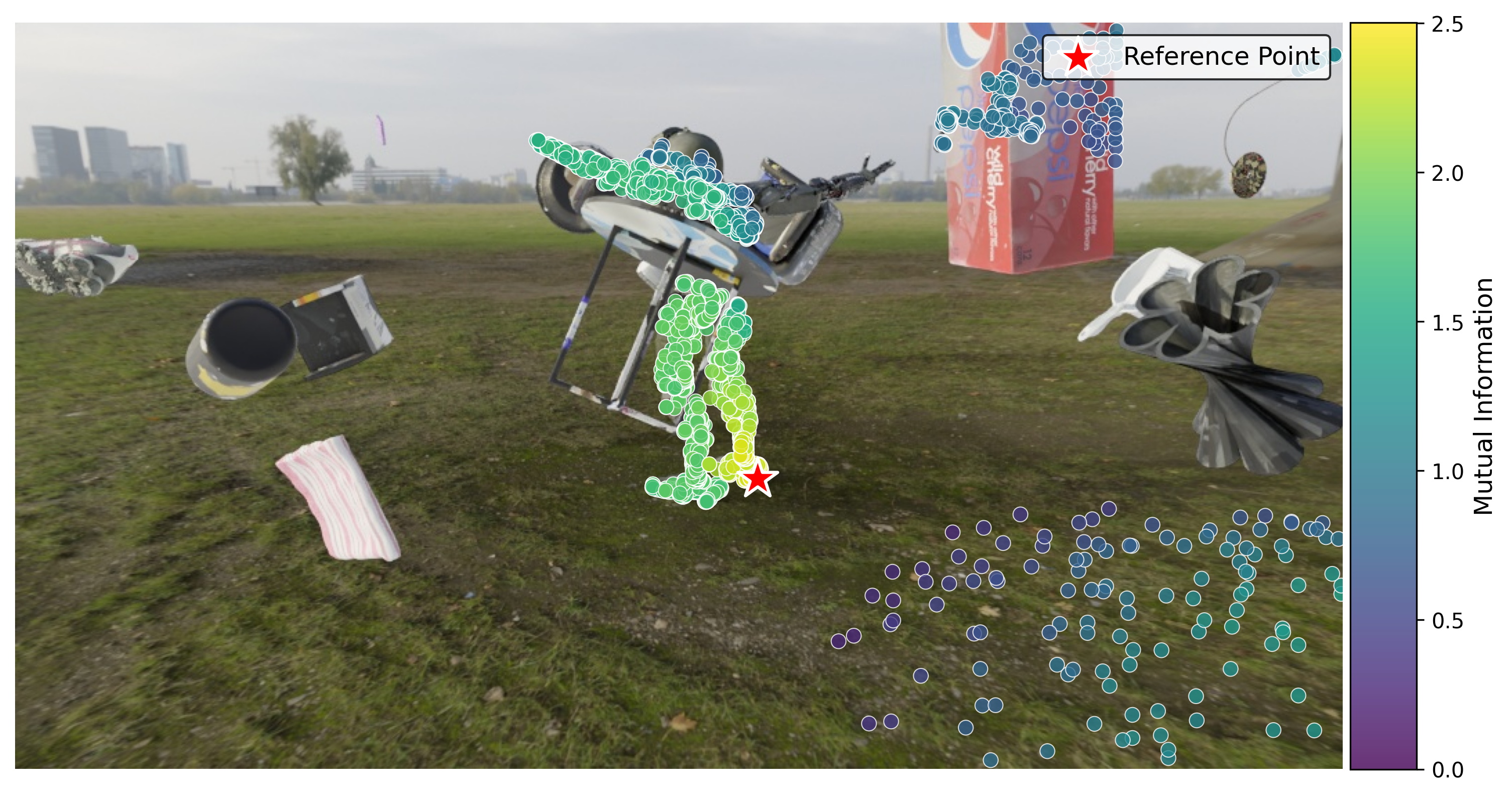}
        \end{minipage}
        \caption{MI heatmap w.r.t.\  reference point 2 ($\star$).}
        \label{fig:video_mi2}
    \end{subfigure}\hfill%
    \begin{subfigure}[t]{0.327\textwidth}
        \centering
        \begin{minipage}[t][2.9cm][t]{\textwidth}
            \centering
            \raisebox{\SHIFT}{\includegraphics[width=\textwidth]{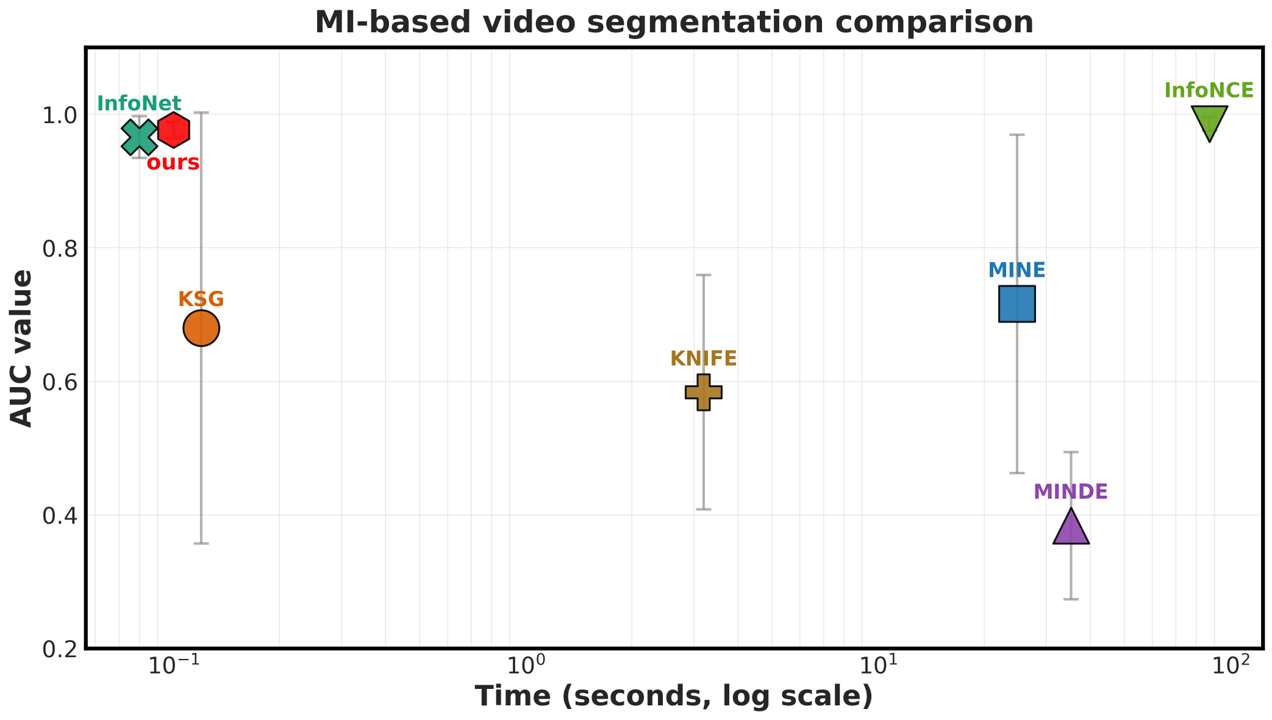}}
        \end{minipage}
        \caption{MI-based video segmentation.}
        \label{fig:video_aucpr}
    \end{subfigure}

    \caption{\textbf{Point trajectory mutual information for video object segmentation on PointOdyssey}~\citep{zheng2023pointodyssey}.
    We estimate mutual information $\mi(\text{trajectory}({P^*}),\text{trajectory}(P))$ between a reference point trajectory $P^*$ (marked by $\star$) and every other point trajectory $P$ across video frames, yielding $\sim 4\times 10^3$ MI terms per video.
    (a,b) The estimated MI is consistently higher for points belonging to the same object as the reference point than for points on other objects.
    (c) Using trajectory MI $\mi(\text{trajectory}({P^*}),\text{trajectory}(P))$ as an affinity for video segmentation, where we report the area under the precision--recall curve (AUC-PR) and the time of different MI estimators.}
    \label{fig:motion-3subfig}
\end{figure*}

\begin{table*}[t!]
\centering
\caption{\textbf{Comparison of policy success rates using key states extracted by different MI estimators across three robotic tasks}. MINE-100 denotes training MINE for 100 iterations, while No-MI-Loss removes the MI maximization term when identifying key states. MI is estimated on 100-dimensional variables using a batch size of 100. \method uses 25 slices, whereas InfoNet uses 250 slices.}
\label{tab:sr-comparison}
\begin{tabular*}{\textwidth}{@{\extracolsep{\fill}} l c c c c c c c}
\toprule
\multirow{2}{*}{\textbf{Tasks}} &
\multicolumn{2}{c}{\textbf{Pick Cube}} &
\multicolumn{2}{c}{\textbf{Stack Cube}} &
\multicolumn{2}{c}{\textbf{Peg Insertion}} &
\multirow{2}{*}{\textbf{Time (s)}} \\
\cmidrule(lr){2-3} \cmidrule(lr){4-5} \cmidrule(lr){6-7}
& \textbf{Seen} & \textbf{Unseen} & \textbf{Seen} & \textbf{Unseen} & \textbf{Seen} & \textbf{Unseen} & \\
\midrule
No-MI-Loss & 66.0 & 60.0 & 67.4 & \textbf{41.0} & 38.6 & 9.3 & -- \\
MINE-100 & 86.4 & 81.0 & 68.0 & 37.0 & 55.0 & 13.5 & 0.62 \\
MINE-1000 & 81.2 & 81.0 & 61.2 & 37.0 & 65.4 & 17.8 & 6.01 \\
InfoNet & 91.0 & 76.0 & 63.0 & 27.0 & 46.4 & 9.8 & 1.23 \\
\method (Ours) & \textbf{94.2} & \textbf{82.0} & \textbf{68.2} & 37.0 & \textbf{72.4} & \textbf{18.3} & 2.17 \\
\bottomrule
\end{tabular*}
\end{table*}

We further evaluate \method on 12 objects sampled from 6 different videos, where we segment video objects using MI estimators.  A visualization of these objects can be found in Fig.~\ref{fig:3d_track_full1} and Fig.~\ref{fig:3d_track_full2} in the appendix. In Fig.~\ref{fig:motion-3subfig}, we compare different methods by comparing the estimated pointwise correlations with the ground-truth segmentation masks. On this out-of-distribution dataset, \method achieves competitive segmentation accuracy while being orders more efficient.

\paragraph{Robotic manipulation concept discovery} \label{paragraph:maxmi}
To fully demonstrate our \method's strong potential in complex, large-scale applications in real world, we further apply our method to a robotic manipulation concept discovery task. The goal is to identify \emph{key states}–critical moments in a trajectory $\tau^i = \{s^i_t\}_{t=1}^T$ that carry strong physical significance (e.g., "peg aligned with hole"). Identifying such key states has been shown to significantly improve robotics policy training~\citep{zhou2024maxmi}. Following recent frameworks~\citep{zhou2024maxmi}, we extract key states by maximizing the mutual information $\mi(s_{t}; s_{t - \Delta t})$ between the key state $s_{t}$ and its prior states $s_{t - \Delta t}$ ($\Delta t > 0$):
\[
    \max \mi(s_{t}; s_{t - \Delta t}), \enskip \forall t \in {1, 2, ..., T}
\]
where each $s_t \in \mathbb{R}^{100}$  encodes the states of an environmental observation. Here $\Delta t = 2$ and $T=200$. We use $5$-sliced MI as a surrogate of $\mi(s_{t}; s_{t - \Delta t})$, where each sliced MI is computed using \method. A total number of $S=40$ slices are used.

We train manipulation policies using key concepts extracted from different MI estimators and compare the success rates (SR) of the resulting policies. We consider three manipulation tasks from ManiSkill 2~\citep{gu2023maniskill2}—Pick Cube, Stack Cube, and Peg Insertion. As summarized in Table~\ref{tab:sr-comparison}, concepts derived from \method substantially outperform those from other methods in terms of success rate under equal or even reduced time budgets. These results highlight that our model generalizes effectively to unseen, complex real-world data, serving as a versatile toolbox for dependence assessment in modern machine learning applications.

\section{Related Work}
\label{sec:related}

\textbf{Neural mutual information estimators}. A series of powerful, neural network-based methods have been developed for MI estimation. The most prominent among these is the MINE estimator~\citep{belghazi2018mutual}, which builds on the Donsker–Varadhan representation. Other approaches rely on density-ratio estimation~\citep{letizia2024mutual, gutmann2010noise}, direct density modeling~\citep{song2019understanding}, score function estimation~\citep{franzese2023minde}, or leverage normalizing flows~\citep{duong2023diffeomorphic, butakov2024mutual} and autoencoders~\citep{gowri2024approximating, butakov2023information} to construct MI estimates. Despite methodological differences, these methods are primarily designed to improve estimation accuracy, and the computational overhead associated with neural network training is often overlooked in real-world deployment. In contrast, we target the orthogonal dimension of computational efficiency, replacing costly iterative optimization with a lightweight forward pass at inference time.

\textbf{Efficient computation of mutual information}.
Various approaches have been developed for rapid MI computation,
each with different trade-offs. Non-parametric methods \citep{kraskov2004estimating, moon1995estimation, silverman2018density} offer training-free efficiency but typically lack the capacity to capture complex dependencies in high-dimensional data. Copula-based approaches \citep{keziou2016semiparametric,safaai2018information,purkayastha2024fastmi,zeng2018jackknife} balance efficiency with accuracy by assuming data follows a known copula family (e.g., Gaussian copula), but this assumption limits their applicability to general distributions.
The recent InfoNet \citep{hu2024infonet} enables fast MI estimation through neural network pretraining -- a concept related to our work. However, InfoNet is restricted to scalar inputs due to its lookup table designs and limited pretraining,
whereas we support handling of multivariate variables with varying dimensionalities using a single unified model. 
Concurrent to our work, MIST~\citep{gritsai2025mist} also considers pretrained MI estimators for multivariate data with varying dimensionalities, tailored to low-sample regimes. However, its supervised pretraining requires ground-truth MI values, restricting the training distributions to those with known MI or those constructed through MI-preserving transformations, whereas our universal dependence generation framework can leverage arbitrary base distributions without requiring their MI to be known.

\vspace{0.22cm}

\textbf{Foundation models for statistical analysis}.
Recent advances in large-scale pretrained models have enabled direct statistical analysis on raw data without gradient-based optimization at test time.
For instance, LLM-based frameworks~\citep{requeima2024llm, siddiqui2025evaluating} directly leverage large language models to perform classification and regression through direct inference, while~\citep{sun2026lambda} develops a LLM-based data agent.
Beyond off-the-shelf LLMs, the community has also developed pretrained transformers tailored to specific data-analysis tasks, including prediction on small tabular datasets~\citep{hollmann2025accurate}, Bayesian inference~\citep{vetter2025effortless, teh2025solving}, and time-series forecasting~\citep{ansari2024chronos}.
Our work is closely related to this emerging paradigm.
However, unlike existing methods primarily focus on \emph{one-way} prediction or forecasting, we study the problem of quantifying \emph{mutual} dependence between two multivariate random variables. This requires modeling interactions between two sets of samples rather than predicting one variable from another, motivating our dual-path attention architecture and diverse synthetic distribution generation strategy for effective generalization.

\section{Conclusion}
\label{sec:discussion}

We introduce \method, a foundation model-like architecture for zero-shot estimation of statistical dependence between multivariate variables. Through large-scale pretraining on synthetic distributions spanning diverse dependence structures and marginal patterns, \method learns to directly infer dependence strengths from raw data in a single forward pass, completely eliminating costly per-dataset optimization. By leveraging slicing techniques, it further scales gracefully and effectively to high-dimensional settings. Extensive evaluations demonstrate that \method matches state-of-the-art neural methods in accuracy while being orders of magnitude faster, and it generalizes effectively to unseen, real-world scenarios. By reframing MI estimation as one-step inference rather than dataset-specific optimization, \method enables a paradigm shift toward scalable dependency estimation, with particular promise for large-scale and real-time applications.

\section*{Acknowledgments}
This work is 
supported by the 
Early Career Scheme 
of the 
Research Grants Council (RGC) 
grant \# 27207224, 
the HKU-100 Award, 
and the HKU Shanghai Intelligent Computing Research Center (ICRC).
Yanzhi Chen acknowledges the support from the Qualcomm Innovation Fellowship.

\section*{Impact Statement}
This paper presents work whose goal is to advance the field of Machine Learning. There are many potential societal consequences of our work, none which we feel must be specifically highlighted here.

\bibliography{main}

@article{goldfeld2021sliced,
  title={Sliced mutual information: A scalable measure of statistical dependence},
  author={Goldfeld, Ziv and Greenewald, Kristjan},
  journal={Advances in Neural Information Processing Systems},
  volume={34},
  pages={17567--17578},
  year={2021}
}

@book{silverman2018density,
  title={Density estimation for statistics and data analysis},
  author={Silverman, Bernard W},
  year={2018},
  publisher={Routledge}
}

@article{vaswani2017attention,
  title={Attention is all you need},
  author={Vaswani, Ashish and Shazeer, Noam and Parmar, Niki and Uszkoreit, Jakob and Jones, Llion and Gomez, Aidan N and Kaiser, {\L}ukasz and Polosukhin, Illia},
  journal={Advances in neural information processing systems},
  volume={30},
  year={2017}
}

@article{moon1995estimation,
  title={Estimation of mutual information using kernel density estimators},
  author={Moon, Young-Il and Rajagopalan, Balaji and Lall, Upmanu},
  journal={Physical Review E},
  volume={52},
  number={3},
  pages={2318},
  year={1995},
  publisher={APS}
}

@article{kraskov2004estimating,
  title={Estimating mutual information},
  author={Kraskov, Alexander and St{\"o}gbauer, Harald and Grassberger, Peter},
  journal={Physical review E},
  volume={69},
  number={6},
  pages={066138},
  year={2004},
  publisher={APS}
}

@article{shannon1948mathematical,
  title={A mathematical theory of communication},
  author={Shannon, Claude Elwood},
  journal={The Bell system technical journal},
  volume={27},
  number={3},
  pages={379--423},
  year={1948},
  publisher={Nokia Bell Labs}
}

@inproceedings{letizia2024mutual,
  title={Mutual Information Estimation via $f$-Divergence and Data Derangements},
  author={Letizia, Nunzio Alexandro and Novello, Nicola and Tonello, Andrea M},
  booktitle={The Thirty-eighth Annual Conference on Neural Information Processing Systems},
  year={2024}
}

@inproceedings{duong2023diffeomorphic,
  title={Diffeomorphic information neural estimation},
  author={Duong, Bao and Nguyen, Thin},
  booktitle={Proceedings of the AAAI Conference on Artificial Intelligence},
  volume={37},
  number={6},
  pages={7468--7475},
  year={2023}
}

@article{tsai2020neural,
  title={Neural methods for point-wise dependency estimation},
  author={Tsai, Yao-Hung Hubert and Zhao, Han and Yamada, Makoto and Morency, Louis-Philippe and Salakhutdinov, Russ R},
  journal={Advances in Neural Information Processing Systems},
  volume={33},
  pages={62--72},
  year={2020}
}

@article{donsker1983asymptotic,
  title={Asymptotic evaluation of certain Markov process expectations for large time. IV},
  author={Donsker, Monroe D and Varadhan, SR Srinivasa},
  journal={Communications on pure and applied mathematics},
  volume={36},
  number={2},
  pages={183--212},
  year={1983},
  publisher={Wiley Online Library}
}

@article{jaegle2021perceiver,
  title={Perceiver io: A general architecture for structured inputs \& outputs},
  author={Jaegle, Andrew and Borgeaud, Sebastian and Alayrac, Jean-Baptiste and Doersch, Carl and Ionescu, Catalin and Ding, David and Koppula, Skanda and Zoran, Daniel and Brock, Andrew and Shelhamer, Evan and others},
  journal={arXiv preprint arXiv:2107.14795},
  year={2021}
}

@book{kullback1997information,
  title={Information theory and statistics},
  author={Kullback, Solomon},
  year={1997},
  publisher={Courier Corporation}
}

@article{chen2016infogan,
  title={Infogan: Interpretable representation learning by information maximizing generative adversarial nets},
  author={Chen, Xi and Duan, Yan and Houthooft, Rein and Schulman, John and Sutskever, Ilya and Abbeel, Pieter},
  journal={Advances in neural information processing systems},
  volume={29},
  year={2016}
}

@inproceedings{poole2019variational,
  title={On variational bounds of mutual information},
  author={Poole, Ben and Ozair, Sherjil and Van Den Oord, Aaron and Alemi, Alex and Tucker, George},
  booktitle={International Conference on Machine Learning},
  pages={5171--5180},
  year={2019},
  organization={PMLR}
}

@inproceedings{mcallester2020formal,
  title={Formal limitations on the measurement of mutual information},
  author={McAllester, David and Stratos, Karl},
  booktitle={International Conference on Artificial Intelligence and Statistics},
  pages={875--884},
  year={2020},
  organization={PMLR}
}

@inproceedings{gutmann2010noise,
  title={Noise-contrastive estimation: A new estimation principle for unnormalized statistical models},
  author={Gutmann, Michael and Hyv{\"a}rinen, Aapo},
  booktitle={Proceedings of the thirteenth international conference on artificial intelligence and statistics},
  pages={297--304},
  year={2010},
  organization={JMLR Workshop and Conference Proceedings}
}

@article{tschannen2019mutual,
  title={On mutual information maximization for representation learning},
  author={Tschannen, Michael and Djolonga, Josip and Rubenstein, Paul K and Gelly, Sylvain and Lucic, Mario},
  journal={arXiv preprint arXiv:1907.13625},
  year={2019}
}

@article{oord2018representation,
  title={Representation learning with contrastive predictive coding},
  author={Oord, Aaron van den and Li, Yazhe and Vinyals, Oriol},
  journal={arXiv preprint arXiv:1807.03748},
  year={2018}
}

@article{zheng2023pointodyssey,
  title={Pointodyssey: A large-scale synthetic dataset for long-term point tracking},
  author={Zheng, Yang and Harley, Adam W and Shen, Bokui and Wetzstein, Gordon and Guibas, Leonidas J},
  journal={arXiv preprint arXiv:2307.15055},
  year={2023}
}

@inproceedings{radford2021learning,
  title={Learning transferable visual models from natural language supervision},
  author={Radford, Alec and Kim, Jong Wook and Hallacy, Chris and Ramesh, Aditya and Goh, Gabriel and Agarwal, Sandhini and Sastry, Girish and Askell, Amanda and Mishkin, Pamela and Clark, Jack and others},
  booktitle={International conference on machine learning},
  pages={8748--8763},
  year={2021},
  organization={PMLR}
}

@article{czyz2023beyond,
  title={Beyond Normal: On the Evaluation of Mutual Information Estimators},
  author={Czy{\.z}, Pawe{\l} and Grabowski, Frederic and Vogt, Julia E and Beerenwinkel, Niko and Marx, Alexander},
  journal={arXiv preprint arXiv:2306.11078},
  year={2023}
}

@article{chen2022scalable,
  title={Scalable Infomin Learning},
  author={Chen, Yanzhi and Li, Yingzhen and Weller, Adrian and others},
  journal={Advances in Neural Information Processing Systems},
  volume={35},
  pages={2226--2239},
  year={2022}
}

@inproceedings{belghazi2018mutual,
  title={Mutual information neural estimation},
  author={Belghazi, Mohamed Ishmael and Baratin, Aristide and Rajeshwar, Sai and Ozair, Sherjil and Bengio, Yoshua and Courville, Aaron and Hjelm, Devon},
  booktitle={International conference on machine learning},
  pages={531--540},
  year={2018},
  organization={PMLR}
}

@article{hu2024infonet,
  title={InfoNet: Neural Estimation of Mutual Information without Test-Time Optimization},
  author={Hu, Zhengyang and Kang, Song and Zeng, Qunsong and Huang, Kaibin and Yang, Yanchao},
  journal={arXiv preprint arXiv:2402.10158},
  year={2024}
}

@article{gritsai2025mist,
  title={MIST: Mutual Information Via Supervised Training},
  author={Gritsai, German and Richards, Megan and M{\'e}loux, Maxime and Cho, Kyunghyun and Peyrard, Maxime},
  journal={arXiv preprint arXiv:2511.18945},
  year={2025}
}

@article{goldfeld2022k,
  title={$ k $-Sliced Mutual Information: A Quantitative Study of Scalability with Dimension},
  author={Goldfeld, Ziv and Greenewald, Kristjan and Nuradha, Theshani and Reeves, Galen},
  journal={Advances in neural information processing systems},
  volume={35},
  pages={15982--15995},
  year={2022}
}

@article{tsur2023max,
  title={Max-sliced mutual information},
  author={Tsur, Dor and Goldfeld, Ziv and Greenewald, Kristjan},
  journal={Advances in neural information processing systems},
  volume={36},
  pages={80338--80351},
  year={2023}
}

@article{gowri2024approximating,
  title={Approximating mutual information of high-dimensional variables using learned representations},
  author={Gowri, Gokul and Lun, Xiaokang and Klein, Allon and Yin, Peng},
  journal={Advances in Neural Information Processing Systems},
  volume={37},
  pages={132843--132875},
  year={2024}
}

@article{chen2020neural,
  title={Neural approximate sufficient statistics for implicit models},
  author={Chen, Yanzhi and Zhang, Dinghuai and Gutmann, Michael and Courville, Aaron and Zhu, Zhanxing},
  journal={arXiv preprint arXiv:2010.10079},
  year={2020}
}

@article{butakov2023information,
  title={Information bottleneck analysis of deep neural networks via lossy compression},
  author={Butakov, Ivan and Tolmachev, Alexander and Malanchuk, Sofia and Neopryatnaya, Anna and Frolov, Alexey and Andreev, Kirill},
  journal={arXiv preprint arXiv:2305.08013},
  year={2023}
}

@inproceedings{chen2023learning,
  title={Is learning summary statistics necessary for likelihood-free inference?},
  author={Chen, Yanzhi and Gutmann, Michael U and Weller, Adrian},
  booktitle={International Conference on Machine Learning},
  pages={4529--4544},
  year={2023},
  organization={PMLR}
}

@inproceedings{butakov2024mutual,
  title={Mutual information estimation via normalizing flows},
  author={Butakov, Ivan and Tolmachev, Alexander and Malanchuk, Sofia and Neopryatnaya, Anna and Frolov, Alexey},
  booktitle={The Thirty-eighth Annual Conference on Neural Information Processing Systems},
  year={2024}
}

@article{franzese2023minde,
  title={MINDE: Mutual information neural diffusion estimation},
  author={Franzese, Giulio and Bounoua, Mustapha and Michiardi, Pietro},
  journal={arXiv preprint arXiv:2310.09031},
  year={2023}
}

@inproceedings{chen2025neural,
  title={Neural Mutual Information Estimation with Vector Copulas},
  author={Chen, Yanzhi and Ou, Zijing and Weller, Adrian and Gutmann, Michael U},
  booktitle={The Thirty-ninth Annual Conference on Neural Information Processing Systems},
  year={2025}
}

@inproceedings{song2019understanding,
  title={Understanding the Limitations of Variational Mutual Information Estimators},
  author={Song, Jiaming and Ermon, Stefano},
  booktitle={International Conference on Learning Representations},
  year={2019}
}

@article{tsur2023neural,
  title={Neural estimation and optimization of directed information over continuous spaces},
  author={Tsur, Dor and Aharoni, Ziv and Goldfeld, Ziv and Permuter, Haim},
  journal={IEEE Transactions on Information Theory},
  volume={69},
  number={8},
  pages={4777--4798},
  year={2023},
  publisher={IEEE}
}

@article{du2024lifestyle,
  title={Lifestyle factors and incident multimorbidity related to chronic disease: a population-based cohort study},
  author={Du, Yihui and de Bock, Geertruida H and Vonk, Judith M and Pham, An Thanh and van der Ende, M Yldau and Snieder, Harold and Smidt, Nynke and Krabbe, Paul FM and Alizadeh, Behrooz Z and Lunter, Gerton and others},
  journal={European Journal of Ageing},
  volume={21},
  number={1},
  pages={37},
  year={2024},
  publisher={Springer}
}

@inproceedings{maanpaa2025dense,
  title={Dense Road Surface Grip Map Prediction from Multimodal Image Data},
  author={Maanp{\"a}{\"a}, Jyri and Pesonen, Julius and Hyyti, Heikki and Melekhov, Iaroslav and Kannala, Juho and Manninen, Petri and Kukko, Antero and Hyypp{\"a}, Juha},
  booktitle={International Conference on Pattern Recognition},
  pages={387--404},
  year={2025},
  organization={Springer}
}

@article{czyz2023properties,
  title={On the properties and estimation of pointwise mutual information profiles},
  author={Czy{\.z}, Pawe{\l} and Grabowski, Frederic and Vogt, Julia E and Beerenwinkel, Niko and Marx, Alexander},
  journal={arXiv preprint arXiv:2310.10240},
  year={2023}
}

@inproceedings{zhou2024maxmi,
  title={MaxMI: A Maximal Mutual Information Criterion for Manipulation Concept Discovery},
  author={Zhou, Pei and Yang, Yanchao},
  booktitle={European Conference on Computer Vision},
  pages={88--105},
  year={2024},
  organization={Springer}
}

@inproceedings{blondel2020fast,
  title={Fast differentiable sorting and ranking},
  author={Blondel, Mathieu and Teboul, Olivier and Berthet, Quentin and Djolonga, Josip},
  booktitle={International Conference on Machine Learning},
  pages={950--959},
  year={2020},
  organization={PMLR}
}

@article{keziou2016semiparametric,
  title={Semiparametric estimation of mutual information and related criteria: Optimal test of independence},
  author={Keziou, Amor and Regnault, Philippe},
  journal={IEEE Transactions on Information Theory},
  volume={63},
  number={1},
  pages={57--71},
  year={2016},
  publisher={IEEE}
}

@article{safaai2018information,
  title={Information estimation using nonparametric copulas},
  author={Safaai, Houman and Onken, Arno and Harvey, Christopher D and Panzeri, Stefano},
  journal={Physical Review E},
  volume={98},
  number={5},
  pages={053302},
  year={2018},
  publisher={APS}
}

@article{purkayastha2024fastmi,
  title={fastMI: A fast and consistent copula-based nonparametric estimator of mutual information},
  author={Purkayastha, Soumik and Song, Peter X-K},
  journal={Journal of Multivariate Analysis},
  volume={201},
  pages={105270},
  year={2024},
  publisher={Elsevier}
}

@article{zeng2018jackknife,
  title={Jackknife approach to the estimation of mutual information},
  author={Zeng, Xianli and Xia, Yingcun and Tong, Howell},
  journal={Proceedings of the National Academy of Sciences},
  volume={115},
  number={40},
  pages={9956--9961},
  year={2018},
  publisher={National Acad Sciences}
}

@article{requeima2024llm,
  title={Llm processes: Numerical predictive distributions conditioned on natural language},
  author={Requeima, James and Bronskill, John and Choi, Dami and Turner, Richard and Duvenaud, David K},
  journal={Advances in Neural Information Processing Systems},
  volume={37},
  pages={109609--109671},
  year={2024}
}

@inproceedings{siddiqui2025evaluating,
  title={On Evaluating LLMs’ Capabilities as Functional Approximators: A Bayesian Evaluation Framework},
  author={Siddiqui, Shoaib Ahmed and Chen, Yanzhi and Heo, Juyeon and Xia, Menglin and Weller, Adrian},
  booktitle={Proceedings of the 31st International Conference on Computational Linguistics},
  pages={5826--5835},
  year={2025}
}

@article{chen2015microsoft,
  title={Microsoft coco captions: Data collection and evaluation server},
  author={Chen, Xinlei and Fang, Hao and Lin, Tsung-Yi and Vedantam, Ramakrishna and Gupta, Saurabh and Doll{\'a}r, Piotr and Zitnick, C Lawrence},
  journal={arXiv preprint arXiv:1504.00325},
  year={2015}
}

@article{comanici2025gemini,
  title={Gemini 2.5: Pushing the frontier with advanced reasoning, multimodality, long context, and next generation agentic capabilities},
  author={Comanici, Gheorghe and Bieber, Eric and Schaekermann, Mike and Pasupat, Ice and Sachdeva, Noveen and Dhillon, Inderjit and Blistein, Marcel and Ram, Ori and Zhang, Dan and Rosen, Evan and others},
  journal={arXiv preprint arXiv:2507.06261},
  year={2025}
}

@article{choi2020regularized,
  title={Regularized mutual information neural estimation},
  author={Choi, Kwanghee and Lee, Siyeong},
  year={2020}
}

@inproceedings{pichler2022differential,
  title={A differential entropy estimator for training neural networks},
  author={Pichler, Georg and Colombo, Pierre Jean A and Boudiaf, Malik and Koliander, G{\"u}nther and Piantanida, Pablo},
  booktitle={International Conference on Machine Learning},
  pages={17691--17715},
  year={2022},
  organization={PMLR}
}

@article{papamakarios2021normalizing,
  title={Normalizing flows for probabilistic modeling and inference},
  author={Papamakarios, George and Nalisnick, Eric and Rezende, Danilo Jimenez and Mohamed, Shakir and Lakshminarayanan, Balaji},
  journal={Journal of Machine Learning Research},
  volume={22},
  number={57},
  pages={1--64},
  year={2021}
}

@article{dinh2016density,
  title={Density estimation using real nvp},
  author={Dinh, Laurent and Sohl-Dickstein, Jascha and Bengio, Samy},
  journal={arXiv preprint arXiv:1605.08803},
  year={2016}
}

@article{gu2023maniskill2,
  title={Maniskill2: A unified benchmark for generalizable manipulation skills},
  author={Gu, Jiayuan and Xiang, Fanbo and Li, Xuanlin and Ling, Zhan and Liu, Xiqiang and Mu, Tongzhou and Tang, Yihe and Tao, Stone and Wei, Xinyue and Yao, Yunchao and others},
  journal={arXiv preprint arXiv:2302.04659},
  year={2023}
}

@article{hollmann2025accurate,
  title={Accurate predictions on small data with a tabular foundation model},
  author={Hollmann, Noah and M{\"u}ller, Samuel and Purucker, Lennart and Krishnakumar, Arjun and K{\"o}rfer, Max and Hoo, Shi Bin and Schirrmeister, Robin Tibor and Hutter, Frank},
  journal={Nature},
  volume={637},
  number={8045},
  pages={319--326},
  year={2025},
  publisher={Nature Publishing Group UK London}
}

@article{ansari2024chronos,
  title={Chronos: Learning the language of time series},
  author={Ansari, Abdul Fatir and Stella, Lorenzo and Turkmen, Caner and Zhang, Xiyuan and Mercado, Pedro and Shen, Huibin and Shchur, Oleksandr and Rangapuram, Syama Sundar and Arango, Sebastian Pineda and Kapoor, Shubham and others},
  journal={arXiv preprint arXiv:2403.07815},
  year={2024}
}

@article{teh2025solving,
  title={Solving empirical bayes via transformers},
  author={Teh, Anzo and Jabbour, Mark and Polyanskiy, Yury},
  journal={arXiv preprint arXiv:2502.09844},
  year={2025}
}

@article{vetter2025effortless,
  title={Effortless, Simulation-Efficient Bayesian Inference using Tabular Foundation Models},
  author={Vetter, Julius and Gloeckler, Manuel and Gedon, Daniel and Macke, Jakob H},
  journal={arXiv preprint arXiv:2504.17660},
  year={2025}
}

@article{sun2026lambda,
  title={Lambda: A large model based data agent},
  author={Sun, Maojun and Han, Ruijian and Jiang, Binyan and Qi, Houduo and Sun, Defeng and Yuan, Yancheng and Huang, Jian},
  journal={Journal of the American Statistical Association},
  volume={121},
  number={553},
  pages={1--13},
  year={2026},
  publisher={Taylor \& Francis}
}
\bibliographystyle{icml2026}

\newpage
\appendix
\onecolumn
\newpage

\section{Technical Appendix}
\label{sec:appendix}

\subsection{Theoretical foundations}
\label{Appendix:theoretical-proof}

\newcommand{\E}{\mathbb{E}}
\newcommand{\rvxD}{\rvx_{\D}}
\newcommand{\rvyD}{\rvy_{\D}}
\newcommand{\pjoint}{\p_{\rvxD,\rvyD}}
\newcommand{\pmar}{\p_{\rvxD} \cdot \p_{\rvyD}}

\begin{proposition}[Consistency of the estimator w.r.t sample size $n$]\label{prop:pop-correct}
Let $\p$ be a probability measure over datasets $\D$.
For each $\D$, let $(\rvxD,\rvyD)\sim \pjoint$ with $\pjoint \ll \pmar$ and $I(\rvxD;\rvyD)<\infty$.
For any admissible critic $\theta \in \Theta$, define
\[
\hat{I}_\theta(\rvxD; \rvyD)
:= \E_{\pjoint}[\theta(\rvxD, \rvyD)]
   - \log \E_{\pmar}\!\big[e^{\theta(\rvxD, \rvyD)}\big].
\]
and let $\hat{I}^n_\theta(\rvxD; \rvyD)$ be its empirical estimate with $n$ sample.

Assume:
\begin{enumerate}[leftmargin=*]
\item[(i)] (\textbf{Per-$\D$ attainment}) For $\p$-a.e.\ $\D$, there exists $\theta^\star_{\D}\in\Theta$ attaining the supremum:
$I(\rvxD;\rvyD)=\sup_{\theta\in\Theta}\hat{I}_\theta(\rvxD;\rvyD)$.
\item[(ii)] (\textbf{Hypernetwork capacity}) The optimal selector $\h:\D\mapsto \Theta$ is within the class of the hypernetwork.
\item[(iii)] (\textbf{Population integrability}) The expectation $\E_{\D}[I(\rvxD, \rvyD)]$ satisfies $\E_{\D}[I(\rvxD, \rvyD)] < \infty$
\end{enumerate}
Define $J(\h):=\E_{\D}\!\big[\hat{I}_{\h(\D)}(\rvxD;\rvyD)\big]$ and let $J^n(\h) :=\E_{\D}\!\big[\hat{I}^n_{\h(\D)}(\rvxD;\rvyD)\big]$ be its finite sample estimate with $n$ samples.
Let $\h^\star\in\arg\max_{\h}J^n(\h)$. Then the estimator
\[
\hat{I}_{\h^\star(\D)}(\rvxD;\rvyD)
\]
\end{proposition}
is a consistent estimate to $I(\rvxD; \rvyD)$ $\p$-a.e.\ $\D$.

\begin{proof}
We begin with the following identity:
\[
    \sup_{\h} J^n(\h) = \sup_{\h} \E[\hat{I}^n_{\h(\D)}(\rvxD;\rvyD)] = \E[\sup_{\theta} \hat{I}^n_{\theta}(\rvxD;\rvyD)]
\]
where the second equality comes from the fact that the hypernetwork $\h$ is a universal selector for $\mathcal{D} \to \Theta$,
so that the supremum for each $\hat{I}^n_{\h(\D)}(\rvxD;\rvyD)$ is reachable.

This suggests that for the optimal $\h^* = \arg\max J^n(\mathcal{H})$, we have
\[
    \h^*(\D) = \sup_{\theta} I_{\theta}^n(\rvxD; \rvy_D)
\]
According to~\citep{belghazi2018mutual}, the estimator $\hat{I}(\rvxD, \rvyD) = \sup_{\theta} \hat{I}^n_{\theta}(\rvxD, \rvyD)$ itself is a consistent estimate of $I(\rvxD, \rvyD)$.
This suggests that for each $\D$ and every $\epsilon > 0$, there exists $n(\D) \in \mathbb{N}$, such that
\[
    \Big |I(\rvxD, \rvyD) - \sup_{\theta} \hat{I}^n_{\theta}(\rvxD, \rvyD) \Big| \leq \epsilon, \quad \forall n \geq n(\D), \enskip a.s.
\]
By taking $n' = \sup_{\D} n(\D)$, substituting $\sup_{\theta} I_{\theta}^n(\rvxD; \rvy_D) = \h^*(\D)$, we have that for every $\epsilon > 0$,
\[
    \Big |I(\rvxD, \rvyD) - \hat{I}^{n'}_{\h^*_{\D}}(\rvxD, \rvyD) \Big| \leq \epsilon, \quad \forall \D, \enskip \forall n \geq n', \enskip a.s.
\]
which completes the proof.
\end{proof}

\begin{proposition}[Positive Definiteness of Generated Covariance Matrix]
\label{prop:positive-def-revised}
The covariance matrix constructed by Algorithm \ref{alg:low_rank_cov} is positive definite almost surely, with controllable condition number through the rank parameter $m$.
\end{proposition}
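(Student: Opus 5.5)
Algorithm~\ref{alg:low_rank_cov} is not shown in the text above. I will assume it uses the standard low-rank-plus-diagonal construction: draw a factor matrix $\mathbf{W}\in\mathbb{R}^{d\times m}$ with i.i.d.\ Gaussian (or other continuous) entries, draw a positive diagonal $\mathbf{D}=\mathrm{diag}(\delta_1,\dots,\delta_d)$ with $\delta_i>0$, set $\boldsymbol{\Sigma}=\mathbf{W}\mathbf{W}^\top+\mathbf{D}$, and optionally rescale to a correlation matrix $\mathbf{R}=\mathbf{S}^{-1/2}\boldsymbol{\Sigma}\mathbf{S}^{-1/2}$ with $\mathbf{S}=\mathrm{diag}(\boldsymbol{\Sigma})$. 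The proof then has two parts: positive definiteness, which is essentially deterministic once $\mathbf{D}\succ 0$, and an explicit condition-number bound showing how $m$ enters.

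\textbf{Positive definiteness.} Take any $\mathbf{v}\neq 0$. Then
\[
\mathbf{v}^\top\boldsymbol{\Sigma}\mathbf{v}=\|\mathbf{W}^\top\mathbf{v}\|_2^2+\sum_i\delta_i v_i^2\ \ge\ \min_i\delta_i\,\|\mathbf{v}\|_2^2>0.
\]
So $\boldsymbol{\Sigma}\succ 0$ whenever every $\delta_i>0$.
\begin{itemize}
\item If the $\delta_i$ come from a continuous positive law (e.g.\ uniform on $[a,b]$ with $a>0$, or log-normal), this holds surely, and a fortiori almost surely.
\item If instead the algorithm adds no diagonal, I fall back on a rank argument. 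For $m\ge d$, the set of $\mathbf{W}$ with $\mathrm{rank}(\mathbf{W})<d$ is the zero set of a nonzero polynomial, namely the sum of squared $d\times d$ minors. Such a set has Lebesgue measure zero, so $\mathbf{W}\mathbf{W}^\top\succ 0$ almost surely under any absolutely continuous law.
\item Normalisation is a congruence by the positive diagonal matrix $\mathbf{S}^{-1/2}$, which preserves positive definiteness. Hence $\mathbf{R}\succ 0$ as well.
\end{itemize}

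\textbf{Condition number.} By Weyl's inequality,
\[
\lambda_{\min}(\boldsymbol{\Sigma})\ge\min_i\delta_i,
\qquad
\lambda_{\max}(\boldsymbol{\Sigma})\le\|\mathbf{W}\|_2^2+\max_i\delta_i,
\]
so
\[
\kappa(\boldsymbol{\Sigma})\le\frac{\sigma_{\max}(\mathbf{W})^2+\max_i\delta_i}{\min_i\delta_i}.
\]
To make the role of $m$ explicit, I will bound $\sigma_{\max}(\mathbf{W})$ for a Gaussian $d\times m$ matrix with entry variance $\sigma^2$. Standard non-asymptotic bounds (Davidson--Szarek / Gordon) give $\sigma_{\max}(\mathbf{W})\le\sigma(\sqrt d+\sqrt m+t)$ with probability at least $1-e^{-t^2/2}$. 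The resulting bound on $\kappa$ therefore grows roughly linearly in $m$: small $m$ keeps $\boldsymbol{\Sigma}$ close to the diagonal (well conditioned, weakly correlated), while large $m$ produces stronger correlations and a larger condition number. This is the sense in which the rank parameter "controls" conditioning. If the algorithm rescales $\mathbf{W}$ by $1/\sqrt m$, the bound becomes uniform in $m$, and I will state whichever form matches the algorithm.

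\textbf{Main obstacle.} The positive-definiteness part is routine. The difficulty is that "controllable through $m$" is informal, and it has to be pinned down against the exact steps of Algorithm~\ref{alg:low_rank_cov}: the distributions of $\mathbf{W}$ and $\mathbf{D}$, and whether rescaling or normalisation is applied. The correlation-normalisation step also needs care. For it I plan to use $\kappa(\mathbf{R})\le\kappa(\boldsymbol{\Sigma})\cdot\max_i S_{ii}/\min_i S_{ii}$, and then bound the diagonal ratio by the same high-probability row-norm estimates of $\mathbf{W}$.
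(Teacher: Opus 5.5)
Your positive-definiteness argument is the paper's own. It writes $\mathbf{v}^\top\boldsymbol{\Sigma}\mathbf{v}=\|\mathbf{W}^\top\mathbf{v}\|_2^2+\sum_i\delta_i v_i^2$ with $\delta_i\sim\mathrm{Uniform}(0,1)$, so positivity holds almost surely. It then uses the same bounds $\lambda_{\min}(\boldsymbol{\Sigma})\ge\min_i\delta_i$ and $\lambda_{\max}(\boldsymbol{\Sigma})\le\|\mathbf{W}\|^2+\max_i\delta_i$. The only difference is that it uses the cruder Frobenius norm $\|\mathbf{W}\|_F^2\approx dm$ where you use $\sigma_{\max}(\mathbf{W})^2$ with Davidson--Szarek. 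Your congruence remark also covers the algorithm's optional rescaling $\mathrm{diag}(\boldsymbol{\sigma})\,\mathbf{R}\,\mathrm{diag}(\boldsymbol{\sigma})$ with $\sigma_i>0$, a step the paper skips. Your high-probability statement, conditional on $\mathbf{D}$, is also more defensible than the paper's bare claim that the \emph{expected} condition number is $\mathcal{O}(m)$. The paper's bound cannot give that, since $\mathbb{E}[1/\min_i\delta_i]=\infty$ for uniform $\delta_i$.

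What you should remove is the reading that small $m$ keeps $\boldsymbol{\Sigma}$ near-diagonal, weakly correlated and well conditioned, with correlations and $\kappa$ increasing in $m$. An upper bound that grows with $m$ does not tell you how the true quantity moves, and here your stated direction is wrong. Because $W_{ij}\sim\mathcal{N}(0,1)$ is unscaled while $\delta_i<1$, the factor term already dominates at $m=1$. Write $\boldsymbol{\Sigma}=\mathbf{D}+\mathbf{w}\mathbf{w}^\top$.
\begin{itemize}
\item A unit $\mathbf{v}\perp\mathbf{w}$ supported on the two coordinates with the smallest $\delta_i$ gives $\lambda_{\min}(\boldsymbol{\Sigma})\le\delta_{(2)}$, which has mean $2/(d+1)$.
\item The choice $\mathbf{v}=\mathbf{w}/\|\mathbf{w}\|_2$ gives $\lambda_{\max}(\boldsymbol{\Sigma})\ge\|\mathbf{w}\|_2^2\approx d$.
\item Hence $\kappa$ is typically already of order $d^2$ at $m=1$, the same order your bound gives at $m=d$.
\item For $m\gg d$, the two-sided Davidson--Szarek bounds give $\kappa(\mathbf{W}\mathbf{W}^\top)\approx(\sqrt{m}+\sqrt{d})^2/(\sqrt{m}-\sqrt{d})^2\to 1$.
\end{itemize}
The correlations behave the same way. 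With $\mathbf{w}_i$ the rows of $\mathbf{W}$, the quantities $\langle\mathbf{w}_i,\mathbf{w}_j\rangle/\sqrt{(\|\mathbf{w}_i\|_2^2+\delta_i)(\|\mathbf{w}_j\|_2^2+\delta_j)}$ are of order one at $m=1$ but of order $m^{-1/2}$ for large $m$. This matches the paper's appendix, which says low rank gives strong correlations. State the $m$-dependence only as a property of your upper bound, not of $\boldsymbol{\Sigma}$ itself.
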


\begin{proof}
We construct $\boldsymbol{\Sigma} = \mathbf{W}\mathbf{W}^\top + \mathbf{D}$ where $\mathbf{W} \in \mathbb{R}^{d \times m}$ with $W_{ij} \sim \mathcal{N}(0,1)$ and $\mathbf{D} = \text{diag}(d_1, \ldots, d_d)$ with $d_i \sim \text{Uniform}(0,1)$.

For any nonzero $\mathbf{x} \in \mathbb{R}^d$:
\[
\mathbf{x}^\top \boldsymbol{\Sigma} \mathbf{x} = \underbrace{||\mathbf{W}^\top \mathbf{x}||_2^2}_{\geq 0} + \underbrace{\sum_{i=1}^d d_i x_i^2}_{> 0 \text{ a.s.}} > 0
\]

The eigenvalues satisfy $\lambda_{\min}(\boldsymbol{\Sigma}) \geq \min_i d_i > 0$ and $\lambda_{\max}(\boldsymbol{\Sigma}) \leq ||\mathbf{W}||_F^2 + \max_i d_i$. The expected condition number scales as $\mathcal{O}(m)$, allowing control over numerical stability.
\end{proof}

\begin{proposition}[Invariance of MI under Noise Padding]
\label{prop:mi-invariance-extended}
Let $(X, Y)$ be random variables with $X \in \mathbb{R}^{d_x}$, $Y \in \mathbb{R}^{d_y}$. For any independent noise variables $\epsilon_X \perp \epsilon_Y \perp (X,Y)$ of arbitrary dimensions, defining $X' = [X, \epsilon_X]$ and $Y' = [Y, \epsilon_Y]$:
\[
I(X'; Y') = I(X; Y)
\]
This invariance holds for any MI estimator, including the DV representation used in \method.
\end{proposition}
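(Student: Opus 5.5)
The plan is to prove the identity with the chain rule for mutual information and the independence assumptions. I read the hypothesis ``$\epsilon_X \perp \epsilon_Y \perp (X,Y)$'' as mutual independence of the three blocks $\epsilon_X$, $\epsilon_Y$ and $(X,Y)$. The joint law then factorises as $p_{X,Y}\otimes p_{\epsilon_X}\otimes p_{\epsilon_Y}$, and this factorisation is the only structural fact I will use.

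First apply the chain rule on the first argument:
\[
I(X';Y') = I(X;Y') + I(\epsilon_X;Y'\mid X).
\]
The second term vanishes. By the factorisation, $\epsilon_X$ is independent of $(X,Y,\epsilon_Y)$, hence conditionally independent of $Y'=(Y,\epsilon_Y)$ given $X$.

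Next expand the remaining term on the second argument:
\[
I(X;Y') = I(X;Y) + I(X;\epsilon_Y\mid Y).
\]
The last term also vanishes, since $\epsilon_Y$ is independent of $(X,Y)$. Combining the two displays gives $I(X';Y')=I(X;Y)$.

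To avoid relying on conditional MI for general (possibly non-density) laws, I would give an equivalent KL-based argument that works directly with the paper's definition $I=\mathrm{KL}(p_{X',Y'}\,\|\,p_{X'}\otimes p_{Y'})$. The marginals of the padded variables are $p_{X'}=p_X\otimes p_{\epsilon_X}$ and $p_{Y'}=p_Y\otimes p_{\epsilon_Y}$. Up to reordering coordinates, $p_{X'}\otimes p_{Y'}=(p_X\otimes p_Y)\otimes p_{\epsilon_X}\otimes p_{\epsilon_Y}$, while $p_{X',Y'}=p_{X,Y}\otimes p_{\epsilon_X}\otimes p_{\epsilon_Y}$. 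The Radon--Nikodym derivative of the joint with respect to the product of marginals is therefore $\frac{dp_{X,Y}}{d(p_X\otimes p_Y)}(x,y)$ and does not depend on the noise coordinates. Integrating its logarithm against the joint and then integrating out the noise returns exactly $I(X;Y)$. This also settles absolute continuity: $p_{X',Y'}\ll p_{X'}\otimes p_{Y'}$ holds iff $p_{X,Y}\ll p_X\otimes p_Y$. If that fails, both sides equal $+\infty$, and the identity still holds.

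For the sentence about estimators I would restrict the claim to the DV representation (Eq.~\ref{eq:mi-dv}), since a literal ``any estimator'' claim is false for finite-sample estimators. Two inequalities suffice:
\begin{itemize}
\item A critic $\theta(x,y)$ on the original variables, lifted to $\theta'(x',y')=\theta(x,y)$, gives the same DV objective under the padded laws, because the noise integrates out of both expectations. Hence the padded supremum is at least the original one.
\item The padded supremum is at most $I(X';Y')$ by the Donsker--Varadhan inequality, and $I(X';Y')=I(X;Y)$ by the first part.
\end{itemize}
So the DV supremum is invariant as well, and the optimal critic can be taken independent of the noise, namely $\log\frac{dp_{X,Y}}{d(p_X\otimes p_Y)}$.

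The main obstacle is making precise what ``holds for any MI estimator'' means. For finite samples, estimators such as Eq.~\ref{eq:empirical-mi} with a restricted critic class generally change when noise is appended, so I would state and prove the invariance only at the population level: for the true MI and for the DV supremum.
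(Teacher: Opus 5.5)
Your proof is correct, and its core is the paper's argument. The paper factorises $p(x',y')=p(x,y)p(\epsilon_x)p(\epsilon_y)$, $p(x')=p(x)p(\epsilon_x)$, $p(y')=p(y)p(\epsilon_y)$ and observes that the density ratio is unchanged. That is exactly your Radon--Nikodym computation, and yours additionally covers laws without densities and, through the absolute-continuity equivalence, the case $I=\infty$.

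Your chain-rule derivation is a genuinely different and shorter route. It yields only the value of the MI, however, whereas the density-ratio route also shows that the optimal critic is noise-free, which is what the DV claim needs. On that claim, the paper merely asserts $\sup_{\theta'}=\sup_{\theta}$ by pointing to a noise-free optimal critic. Your lifting argument gives $\ge$, and the DV inequality combined with the first part gives $\le$, so both directions become explicit.

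Your mutual-independence reading of $\epsilon_X\perp\epsilon_Y\perp(X,Y)$ is the one the paper's factorisation uses, and it is genuinely required. Take $X$ constant, $Y,\epsilon_Y$ independent fair bits, and $\epsilon_X=Y\oplus\epsilon_Y$: all three pairs are independent, yet $I(X';Y')=\log 2\neq 0=I(X;Y)$.

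Finally, restricting the ``any MI estimator'' sentence to population quantities is the right call. The paper's proof likewise establishes invariance only for the true MI and the population DV functional, not for finite-sample estimators such as Eq.~\ref{eq:empirical-mi}.
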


\begin{proof}
Since $\epsilon_X \perp \epsilon_Y \perp (X,Y)$, the joint and marginal densities factor as:
\begin{align}
p(x', y') &= p(x, y) \cdot p(\epsilon_x) \cdot p(\epsilon_y) \\
p(x') &= p(x) \cdot p(\epsilon_x), \quad p(y') = p(y) \cdot p(\epsilon_y)
\end{align}

Therefore, the density ratio is preserved:
\[
\frac{p(x', y')}{p(x')p(y')} = \frac{p(x, y)}{p(x)p(y)}
\]

For the DV representation specifically:
\begin{align}
I(X'; Y') &= \sup_{\theta'} \mathbb{E}_{p(x',y')}[\theta'] - \log \mathbb{E}_{p(x') \otimes p(y')}[e^{\theta'}] \\
&= \sup_{\theta} \mathbb{E}_{p(x,y)}[\theta] - \log \mathbb{E}_{p(x) \otimes p(y)}[e^{\theta}] \\
&= I(X; Y)
\end{align}
where the optimal critic $\theta'^*(x', y') = \theta^*(x, y)$ depends only on the non-noise components.
\end{proof}

\begin{corollary}[$k$-Sliced MI Invariance and Approximation]
\label{cor:k-sliced}
For high-dimensional variables with $d > d_{\max}$, the $k$-sliced MI with padding satisfies:
\begin{enumerate}
\item \textbf{Invariance:} For padded variables $X', Y'$ and random projections $\{P_i\}_{i=1}^k$:
\[
I_{k\text{-sliced}}(X'; Y') = \frac{1}{k}\sum_{i=1}^k I(P_i X'; P_i Y') = I_{k\text{-sliced}}(X; Y)
\]

\item \textbf{Approximation Quality:} Under mild regularity conditions:
\[
|I_{k\text{-sliced}}(X; Y) - I(X; Y)| \leq \frac{C}{\sqrt{k}} \cdot \sqrt{\text{Var}_{P}[I(PX; PY)]}
\]
where $C$ is a universal constant and the variance is over random projections.
\end{enumerate}
\end{corollary}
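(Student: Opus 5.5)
The statement has two parts. The first part is an exact identity and follows from Proposition~\ref{prop:mi-invariance-extended} applied slice by slice. The second part is a concentration-type bound. I would establish it in $L^1$ over the random projections, and I would take the "mild regularity conditions" to include a condition controlling the slicing bias.

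\textbf{Invariance.} I would read the padded-and-projected variables in the way \method uses them: each projection $P_i$ acts on the data coordinates, and the padding noise is appended afterwards. Then $P_iX'=[P_iX,\epsilon_X]$ and $P_iY'=[P_iY,\epsilon_Y]$, where $\epsilon_X\perp\epsilon_Y\perp(P_iX,P_iY)$. This independence holds because $(P_iX,P_iY)$ is a measurable function of $(X,Y)$ and $P_i$, and $P_i$ is drawn independently of everything else. Proposition~\ref{prop:mi-invariance-extended}, applied with $(P_iX,P_iY)$ in place of $(X,Y)$, gives $I(P_iX';P_iY')=I(P_iX;P_iY)$ for each $i$. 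Averaging over $i=1,\dots,k$ then gives the claimed equality of $I_{k\text{-sliced}}$. If the projections were instead applied to the padded vectors, I would restrict attention to projections supported on the data block; otherwise noise would mix into the signal and the identity would fail. I would state this as part of the convention.

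\textbf{Approximation.} Set $Z_i=I(P_iX;P_iY)$. These are i.i.d. across $i$, with mean $\bar I:=\E_P[I(PX;PY)]$ and variance $\sigma^2=\mathrm{Var}_P[I(PX;PY)]$. The regularity assumptions are that $\sigma^2<\infty$ and that $I(X;Y)<\infty$. I would split the error with the triangle inequality:
\[
|I_{k\text{-sliced}}-I(X;Y)|\le |I_{k\text{-sliced}}-\bar I|+|\bar I-I(X;Y)|.
\]
For the first term, Jensen's inequality gives $\E|\frac1k\sum_i Z_i-\bar I|\le \sqrt{\mathrm{Var}(\frac1k\sum_i Z_i)}=\sigma/\sqrt k$, which already has the stated form with constant $1$. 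Chebyshev's inequality gives the high-probability version with constant $C=\delta^{-1/2}$. For the second term, the data processing inequality gives $0\le I(X;Y)-\bar I$, since $P$ is a deterministic map given the slice.

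\textbf{Main obstacle.} The bias $I(X;Y)-\bar I$ does not vanish as $k\to\infty$ in general, so it cannot be bounded by $\sigma/\sqrt k$ without an extra assumption. I would therefore make the regularity condition explicit: $I(X;Y)-\bar I\le c\,\sigma/\sqrt k$. One setting where this can hold is when the slice dimension is large enough that the projections are nearly sufficient, so that both the bias and the spread across slices shrink together. Under this condition the two terms combine to give the bound with $C=1+c$, holding in expectation over the projections, or with probability at least $1-\delta$ with $C=c+\delta^{-1/2}$. I would flag clearly that the deterministic, unconditional reading of the bound is not available, and that the result as stated is the expectation or high-probability version under this bias-domination condition.
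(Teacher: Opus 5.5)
Your Part~1 is the paper's argument: Proposition~\ref{prop:mi-invariance-extended} applied slice by slice. You are more careful than the paper, which never says whether $P_i$ acts before or after padding. Your project-then-pad convention matches the actual pipeline, and you are right that projecting the padded vector would mix noise into the signal.

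For Part~2 the routes differ. The paper invokes the CLT, $\sqrt{k}\,(I_{k\text{-sliced}}-\mathbb{E}_P[I(PX;PY)])\to\mathcal{N}(0,\mathrm{Var}_P[I(PX;PY)])$ in distribution, and then only remarks that the bias depends on the projection dimension. That describes the asymptotic shape of the fluctuations (e.g.\ for confidence intervals over slices). It gives no inequality at any finite $k$, and it never bounds the bias. Your Jensen/Chebyshev step gives a genuine non-asymptotic bound: $C=1$ in $L^1$, and $C=\delta^{-1/2}$ with probability $1-\delta$. The DPI sign $\bar I\le I(X;Y)$ plus an explicit bias hypothesis make transparent what is actually assumed.

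Your diagnosis that the deterministic, unconditional inequality is false is correct. Take $X,Z\sim\mathcal{N}(0,\mathrm{I}_d)$ independent, $Y=\rho X+\sqrt{1-\rho^2}\,Z$ with $0<|\rho|<1$, and any $P\in\mathbb{R}^{m\times d}$ with orthonormal rows and $m<d$. Then $I(PX;PY)=\frac{m}{2}\log\frac{1}{1-\rho^2}$ for every $P$, so the variance is $0$ while the bias is $\frac{d-m}{2}\log\frac{1}{1-\rho^2}>0$.

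Two small remarks. First, you need not assume $\sigma^2<\infty$, since $0\le Z_i\le I(X;Y)<\infty$ by the DPI. Second, your bias condition is far from ``mild.'' The bias does not depend on the number of slices, so imposing the condition for all $k$ forces $I(PX;PY)=I(X;Y)$ for a.e.\ $P$. Imposing it for a single $k$ simply assumes the bias half of the conclusion. The provable content of Part~2 is therefore the $\sigma/\sqrt{k}$ fluctuation bound, which you establish rigorously and the paper only sketches asymptotically.
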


\begin{proof} We prove the two parts respectively as follows.

\textbf{Part 1:} Follows directly from Proposition \ref{prop:mi-invariance-extended} applied to each projection.

\textbf{Part 2:} By the central limit theorem over independent projections:
\[
\sqrt{k}(I_{k\text{-sliced}} - \mathbb{E}_P[I(PX; PY)]) \xrightarrow{d} \mathcal{N}(0, \text{Var}_P[I(PX; PY)])
\]
The bias $|\mathbb{E}_P[I(PX; PY)] - I(X; Y)|$ depends on the projection dimension and decreases as more projections capture the dependency structure.
\end{proof}

\begin{algorithm}[!t]
\caption{Full Training Sequence Generation Pipeline}
\label{alg:gmm_data_gen}
\begin{algorithmic}[1]
\REQUIRE Variable dimensions $d_{\rvx}$, $d_{\rvy}$, max components $K_{\max}=60$, samples $N$, max dim $d_{\max}$
\STATE Randomly select $K \in \{1, 2, \dots, K_{\max}\}$ and sample weights $\{\pi_i\}_{i=1}^K$ s.t. $\sum_{i=1}^K \pi_i = 1$
\STATE Set total dimension $d = d_{\rvx}+d_{\rvy}$
\FOR{each component $i = 1$ to $K$}
    \STATE Sample mean $\boldsymbol{\mu}_i \in \mathbb{R}^{d}$ with elements from $\text{Uniform}([-5, 5])$
    \STATE Select rank $m \in \{1, 2, \dots, d\}$ and generate $\mathbf{W} \in \mathbb{R}^{d \times m}$ with $W_{ij} \sim \mathcal{N}(0, 1)$
    \STATE Generate diagonal matrix $\mathbf{D}$ with $D_{ii} \sim \text{Uniform}(0, 1)$
    \STATE Compute $\boldsymbol{\Sigma}_i' = \mathbf{W}\mathbf{W}^T + \mathbf{D}$ and normalize to correlation matrix $\boldsymbol{\text{Corr}}_i$: $\text{Corr}_{i_{jk}} = \frac{\Sigma'_{i_{jk}}}{\sqrt{\Sigma'_{i_{jj}}\Sigma'_{i_{kk}}}}$
    \STATE Sample variances $\boldsymbol{\sigma}^2 \in \mathbb{R}^d$ from $\text{Uniform}([0.01, 10])$
    \STATE Set $\boldsymbol{\Sigma}_i = \text{diag}(\boldsymbol{\sigma}) \cdot \boldsymbol{\text{Corr}}_i \cdot \text{diag}(\boldsymbol{\sigma})$
\ENDFOR
\STATE Define GMM: $p(\mathbf{z}) = \sum_{i=1}^K \pi_i \mathcal{N}(\mathbf{z} | \boldsymbol{\mu}_i, \boldsymbol{\Sigma}_i)$
\STATE Sample $\mathbf{Z} = \{z^1, z^2, ..., z^N\} \sim p(\mathbf{z})$ and partition each $z^j$ into $x^j \in \mathbb{R}^{d_{\rvx}}$ and $y^j \in \mathbb{R}^{d_{\rvy}}$
\STATE Organize as sequences $\mathbf{X} = \{x^1, ..., x^N\}$ and $\mathbf{Y} = \{y^1, ..., y^N\}$
\STATE If needed, pad $\mathbf{X}$ and $\mathbf{Y}$ with $\mathcal{N}(0, 1)$ noise to dimensions $d_{\max}$
\STATE \textbf{return} $(\mathbf{X}, \mathbf{Y})$
\end{algorithmic}
\end{algorithm}

\subsection{Additional details of synthetic data generation} \label{Appendix:Syn data gen}

\subsection*{Full data generation pipeline}

The following outlines the detailed procedure used to generate synthetic data in our experiments.

\begin{itemize}[leftmargin=*]
    \item \emph{Mixture components number} $K$. We uniformly sample $K$ from the set $\{1,\cdots, 60\}$.
    \item \emph{Weights} $\pi_i$. We uniformly sample each $\pi_i$ from the interval $[0, 1]$, then set $\pi_i \leftarrow \pi_i / \sum^K_{j=1} \pi_j$.
    \item \emph{Mean vector} $\boldsymbol{\mu}_i$. Each element in the mean vector is uniformly sampled from the interval $[-5, 5]$.
   \item \emph{Correlation matrices}. For the correlation matrices in Gaussian copulas and $t$-copula, we introduce a novel low-rank factorization method for covariance matrix construction that ensures meaningful inter-dimensional correlations. We construct the covariance matrix as $\boldsymbol{\Sigma} = \mathbf{W}\mathbf{W}^T + \mathbf{D}$, where $\mathbf{W} \in \mathbb{R}^{d \times m}$ with rank $m \sim \text{Uniform}(\{1, 2, ..., d\})$ has entries $W_{ij} \sim \mathcal{N}(0, 1)$, and $\mathbf{D} = \text{diag}(d_1, ..., d_d)$ with $d_i \sim \text{Uniform}(0, 1)$ ensures positive definiteness (Proposition~\ref{prop:positive-def-revised}). This formulation guarantees that the expected absolute correlation between off-diagonal entries scales as $\mathbb{E}[|\rho_{ij}|] \approx \sqrt{m/(m + 0.5)}$ for $i \neq j$, producing stronger correlations when $m$ is small. By controlling the rank parameter $m$, we systematically vary correlation strength from weak (high rank) to strong (low rank), ensuring the hypernetwork encounters the full spectrum of correlation patterns during training.
   \item \emph{Degree of freedom} $\nu$. For student-$t$ copula, we randomly sample the degree of freedom $\mu$ as $\nu \sim \text{Uniform}([2, 30])$ to vary tail behavior. This exposes the hypernetwork to both short and heavy-tailed dependences.
\end{itemize}

\subsection*{Advantages of the proposed covariance matrix generation mechanism}
\label{Appendix:large_mix_dim}

In this section, we highlight the advantages of our proposed covariance matrix generation mechanism by comparing it to several commonly used alternatives.

We consider three baseline approaches:
\begin{itemize}[leftmargin=*]
    \item \textbf{Full-rank matrix reparameterization}, where the covariance matrix is constructed as \(\mathbf{C} = \mathbf{A}\mathbf{A}^T\), with \(\mathbf{A} \in \mathbb{R}^{d \times d}\) being a full-rank matrix whose entries are sampled independently from \(\mathcal{N}(0,1)\).

    \item \textbf{Cholesky decomposition}, where \(\mathbf{C} = \mathbf{L}\mathbf{L}^T\), and \(\mathbf{L} \in \mathbb{R}^{d \times d}\) is a lower triangular matrix with positive diagonal elements. The diagonal entries of \(\mathbf{L}\) are sampled from \(\mathrm{Uniform}(0,1)\), and the off-diagonal entries from \(\mathcal{N}(0,1)\).

    \item \textbf{Eigenvalue decomposition}, where \(\mathbf{C} = \mathbf{Q}\mathbf{D}\mathbf{Q}^T\), with \(\mathbf{D} \in \mathbb{R}^{d \times d}\) being a diagonal matrix with positive entries sampled from \(\mathrm{Uniform}[0.1, 10.1)\), and \(\mathbf{Q} \in \mathbb{R}^{d \times d}\) being an orthogonal matrix obtained via QR decomposition of a random matrix with entries from \(\mathcal{N}(0,1)\).
\end{itemize}

While these methods guarantee positive definiteness, they often produce covariance matrices with relatively small off-diagonal entries compared to the diagonal, resulting in limited diversity in the induced dependence structure; see the lower panel in Fig.~\ref{fig:correlation_matrix}.

In contrast, our method employs a \emph{low-rank factorization} strategy (see Algorithm~\ref{alg:low_rank_cov}). By tuning the rank parameter \(m \leq d\), we can flexibly control the strength of off-diagonal entries, thereby enabling the generation of covariance matrices with highly diverse dependence structures --- an important design for ensuring training data diversity. This effect is illustrated in the upper panel of Fig.~\ref{fig:correlation_matrix}.

\begin{algorithm}[!t]
\caption{Low-Rank Factorization Method for Covariance Matrix Generation}
\label{alg:low_rank_cov}
\begin{algorithmic}[1]
\REQUIRE Target dimension $d \in \mathbb{N}^+$
\ENSURE Positive definite covariance matrix $\boldsymbol{\Sigma} \in \mathbb{R}^{d \times d}$
\STATE Sample rank parameter $m \sim \text{Uniform}(\{1, 2, \ldots, d\})$
\STATE Generate factor matrix $\mathbf{W} \in \mathbb{R}^{d \times m}$ with entries $W_{ij} \stackrel{\text{i.i.d.}}{\sim} \mathcal{N}(0, 1)$
\STATE Generate diagonal matrix $\mathbf{D} = \text{diag}(d_1, \ldots, d_d)$ with $d_i \stackrel{\text{i.i.d.}}{\sim} \text{Uniform}(0, 1)$
\STATE Compute covariance matrix $\boldsymbol{\Sigma} = \mathbf{W}\mathbf{W}^T + \mathbf{D}$
\STATE \textbf{Optional:} Convert to correlation matrix $\mathbf{R}$ with entries:
\begin{equation*}
R_{ij} = \frac{\Sigma_{ij}}{\sqrt{\Sigma_{ii} \Sigma_{jj}}}
\end{equation*}
\STATE \textbf{Optional:} Rescale to final covariance $\boldsymbol{\Sigma}_{\text{final}} = \text{diag}(\boldsymbol{\sigma}) \mathbf{R} \text{diag}(\boldsymbol{\sigma})$, $\sigma_i \sim \text{Uniform}(0.1, \sqrt{10})$
\STATE \textbf{return} $\boldsymbol{\Sigma}$ (or $\boldsymbol{\Sigma}_{\text{final}}$ if rescaled)
\end{algorithmic}
\end{algorithm}

\begin{figure}[!t]
\centering
\begin{subfigure}{0.32\textwidth}
    \includegraphics[width=\textwidth]{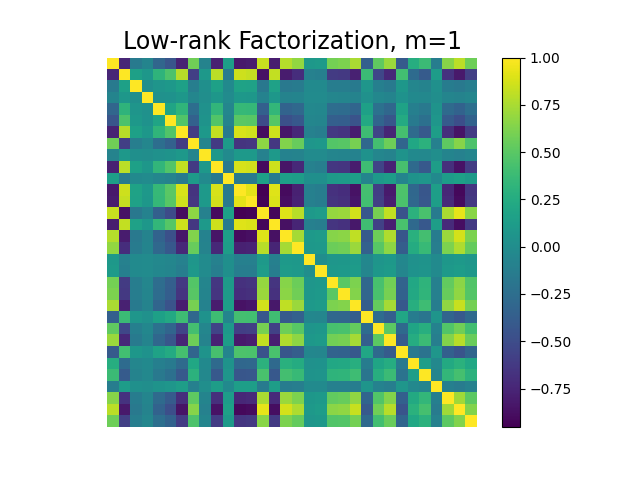}
\end{subfigure}
\hspace{-0.1cm}
\begin{subfigure}{0.32\textwidth}
    \includegraphics[width=\textwidth]{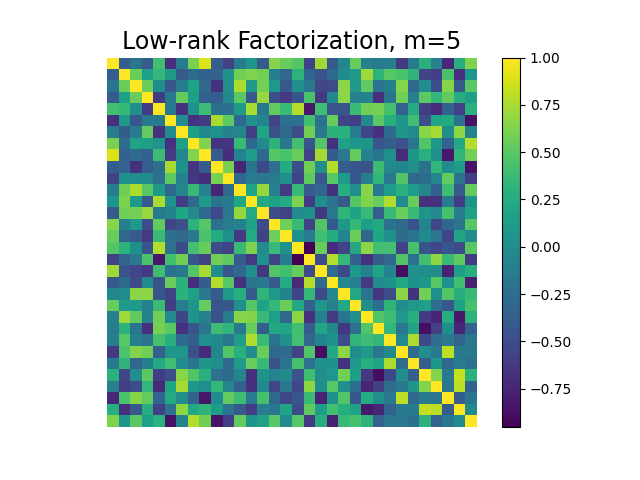}
\end{subfigure}
\hspace{-0.1cm}
\begin{subfigure}{0.32\textwidth}
    \includegraphics[width=\textwidth]{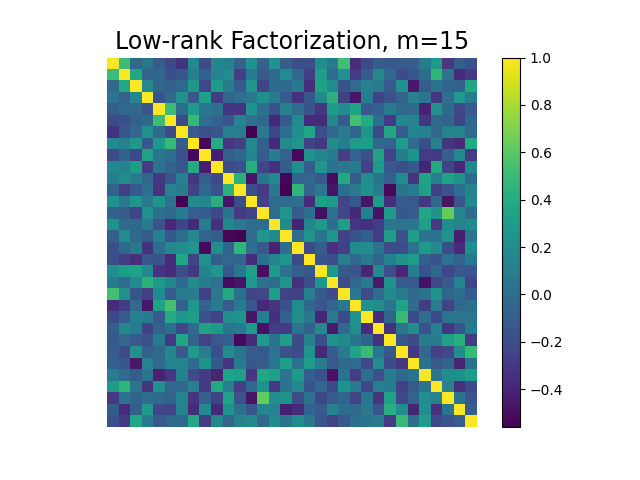}
\end{subfigure}

\vspace{0.2cm}

\begin{subfigure}{0.32\textwidth}
    \includegraphics[width=\textwidth]{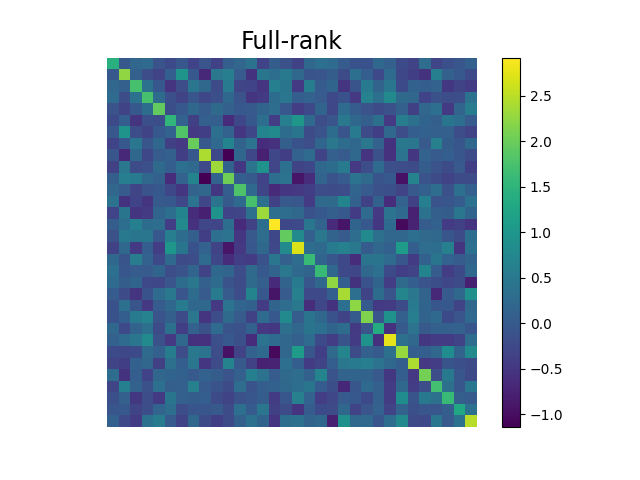}
\end{subfigure}
\hspace{-0.1cm}
\begin{subfigure}{0.32\textwidth}
    \includegraphics[width=\textwidth]{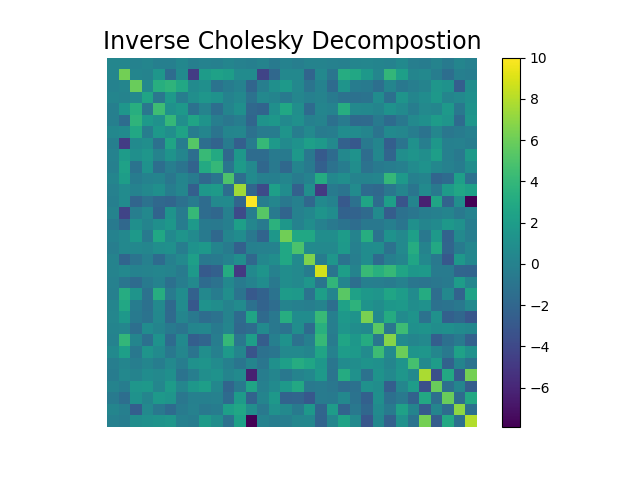}
\end{subfigure}
\hspace{-0.1cm}
\begin{subfigure}{0.32\textwidth}
    \includegraphics[width=\textwidth]{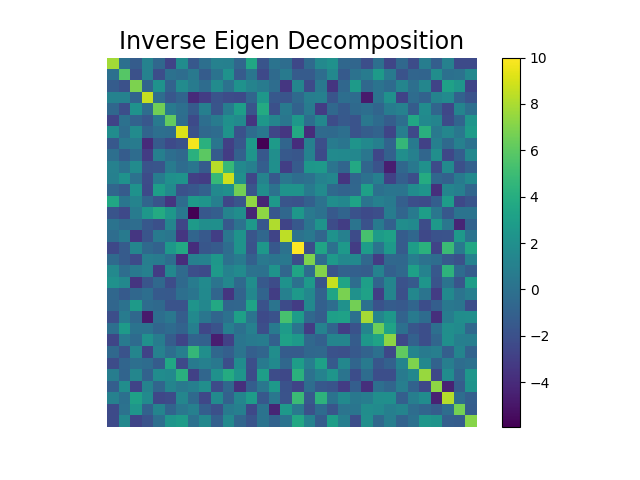}
\end{subfigure}

\caption{Visualization of correlation matrices generated by various methods. Existing approaches often yield small off-diagonal elements, whereas the low rank factor method adjusts their magnitude by tuning the rank factor \(m\).}

\label{fig:correlation_matrix}
\end{figure}

\subsection{Additional experimental details of independent testing experiments} \label{Appendix:detail-express-independent}


\emph{Test cases details}. Below are three different relationships between $X$ and $Y$ in high dimensional independence test in sec.~\ref{paragraph:independence-test}.

(a) \textbf{One feature (linear)}: $X, Z \sim \mathcal{N}\left(0, \mathrm{I}_d\right)$ i.i.d. and $Y=\frac{1}{\sqrt{2}}\left(\frac{1}{\sqrt{d}}\left(\mathbf{1}^{\top} X\right) \mathbf{1}+Z\right)$, where $\mathbf{1}:=$ $(1, \ldots ,1)^{\top} \in \mathbb{R}^d$.

(b) \textbf{Two features}: $X, Z \sim \mathcal{N}\left(0, \mathrm{I}_d\right)$ i.i.d. and $Y_i=\frac{1}{\sqrt{2}} \begin{cases}\frac{1}{d}\left(\mathbf{1}_{\lfloor d / 2\rfloor} 0 \ldots 0\right)^{\top} X+Z_i, & i \leq \frac{d}{2} \\ \frac{1}{d}\left(0 \ldots 0 \mathbf{1}_{\lceil d / 2\rceil}\right)^{\top} X+Z_i, & i>\frac{d}{2} \text {. }\end{cases}$

(c) \textbf{Independent coordinates}: $X, Z \sim \mathcal{N}\left(0, \mathrm{I}_d\right)$ i.i.d. and $Y=\frac{1}{\sqrt{2}}(X+Z)$.

\emph{Dimensionality trends}. In Fig.~\ref{fig:independent-test-9fig}, we examine the impact of dimensionality on estimation performance by considering three settings with increasing dimensions: 16, 64, and 128. As expected, the test power of our method decreases as dimensionality grows, particularly in small-sample regimes ($n \leq 400$).

\begin{figure}[!t]
    \centering

    \begin{subfigure}{.32\textwidth}
        \centering
        \includegraphics[width=\textwidth]{figs/independence-task1-dim16.png}
        \vspace{-2mm}
    \end{subfigure}
    \begin{subfigure}{.32\textwidth}
        \centering
        \includegraphics[width=\textwidth]{figs/independence-task2-dim16.png}
        \vspace{-2mm}
    \end{subfigure}
    \begin{subfigure}{.32\textwidth}
        \centering
        \includegraphics[width=\textwidth]{figs/independence-task3-dim16.png}
        \vspace{-2mm}
    \end{subfigure}

    \begin{subfigure}{.32\textwidth}
        \centering
        \includegraphics[width=\textwidth]{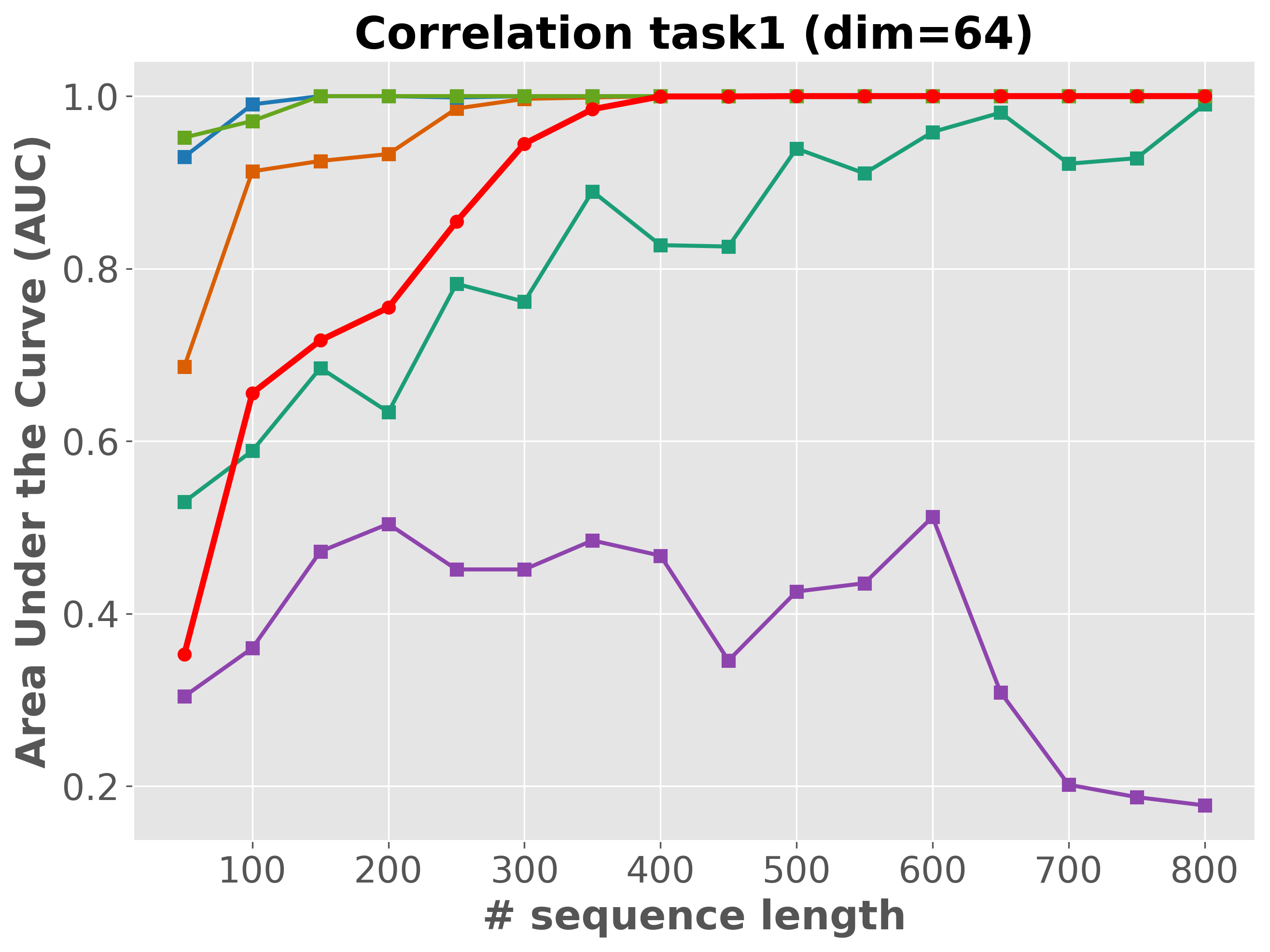}
        \vspace{-2mm}
    \end{subfigure}
    \begin{subfigure}{.32\textwidth}
        \centering
        \includegraphics[width=\textwidth]{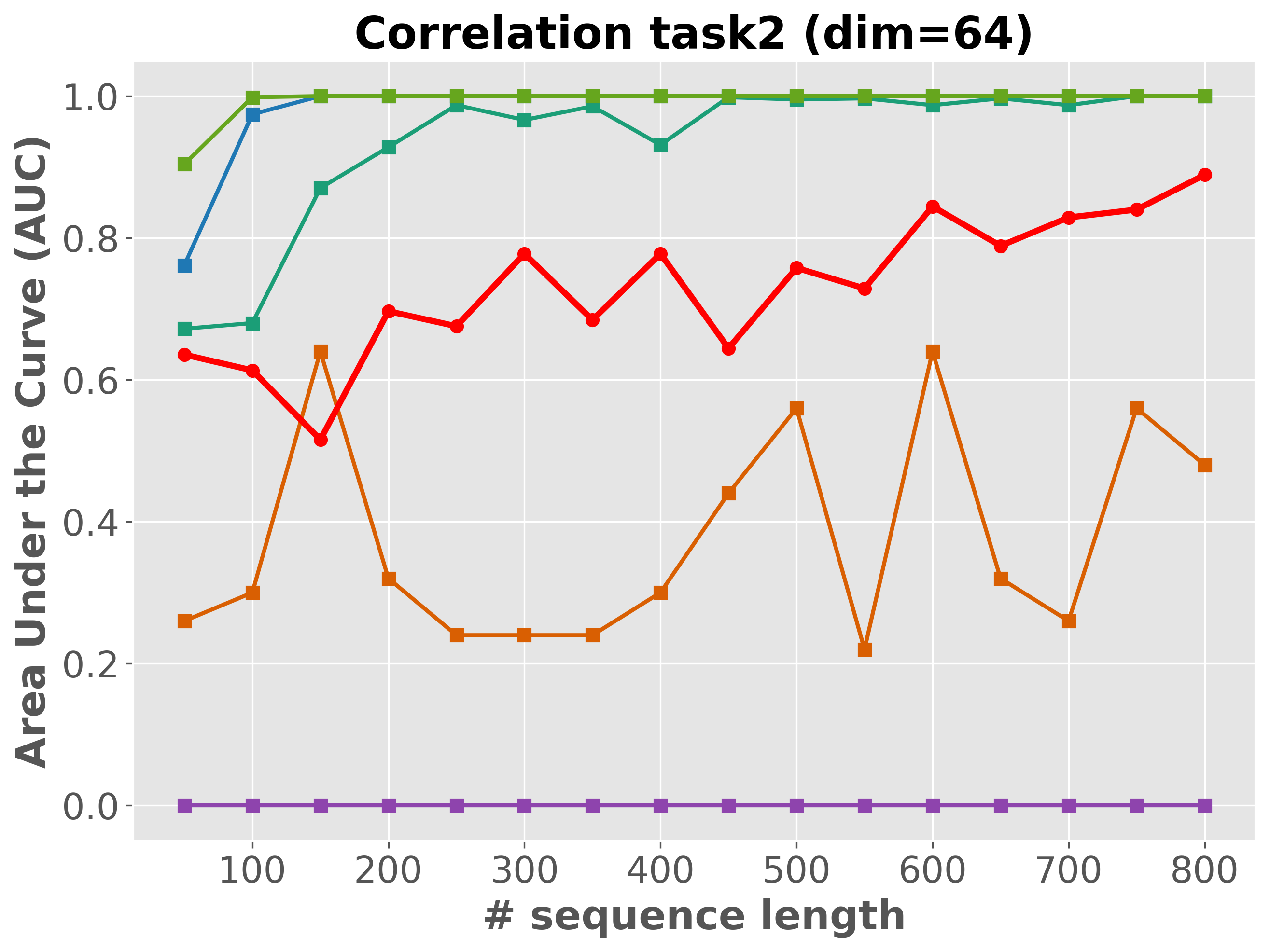}
        \vspace{-2mm}
    \end{subfigure}
    \begin{subfigure}{.32\textwidth}
        \centering
        \includegraphics[width=\textwidth]{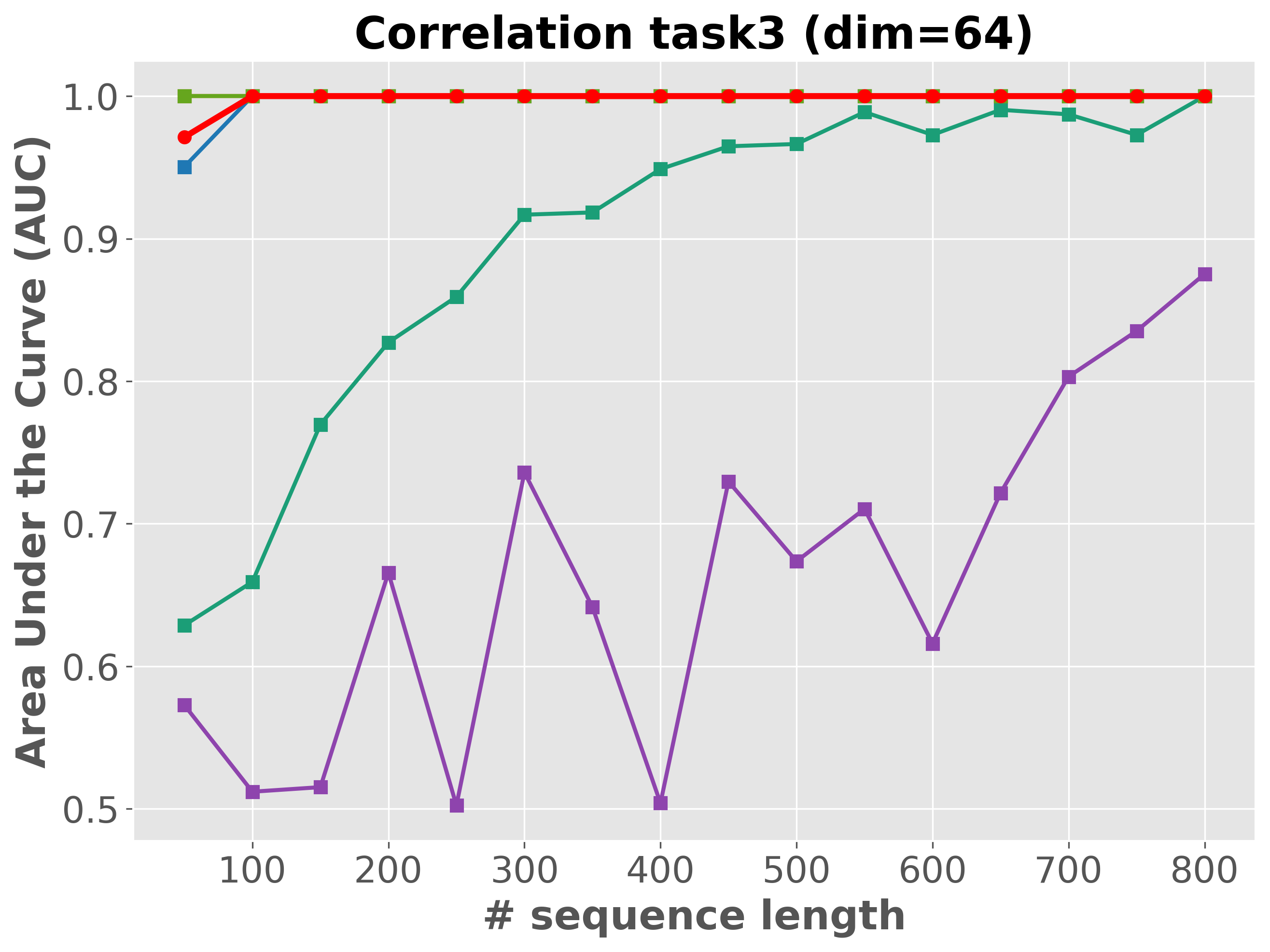}
        \vspace{-2mm}
    \end{subfigure}

    \begin{subfigure}{.32\textwidth}
        \centering
        \includegraphics[width=\textwidth]{figs/independence-task1-dim128.png}
        \vspace{-2mm}
    \end{subfigure}
    \begin{subfigure}{.32\textwidth}
        \centering
        \includegraphics[width=\textwidth]{figs/independence-task2-dim128.png}
        \vspace{-2mm}
    \end{subfigure}
    \begin{subfigure}{.32\textwidth}
        \centering
        \includegraphics[width=\textwidth]{figs/independence-task3-dim128.png}
        \vspace{-2mm}
    \end{subfigure}

    \caption{Independence testing across three correlation types and dimensions (16, 64, 128) across seven methods. Each curve plots the ROC-AUC as a function of sequence length \( n \). The figure demonstrates that performance degrades with increasing dimensionality.}
    \label{fig:independent-test-9fig}
\end{figure}

\subsection{Full results of validation on out-of-domain motion data}\label{app:full-realworld-result}

In this section, we provide detailed results of the experiments on motion data. We only consider points that appear throughout the entire video. Fig.~\ref{fig:3d_track_full1} and Fig.~\ref{fig:3d_track_full2} show the full visualization results of estimated mutual information between one selected point and other points in the videos.

\begin{figure}[t!]
     \centering
     \label{fig:full_3dtrack_1}
     \includegraphics[width=0.96\textwidth]{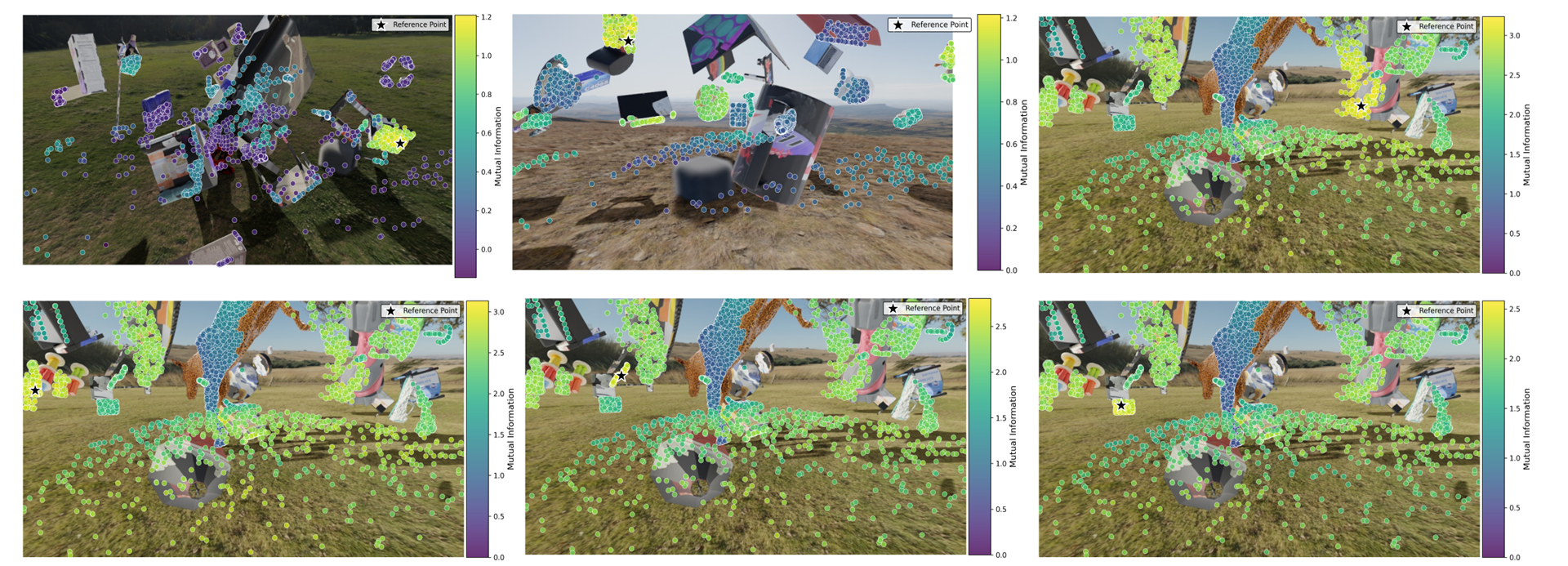}
     \caption{Full visualization results of \method estimated motion data.}
     \label{fig:3d_track_full1}
     \vspace{-8pt}
\end{figure}

\begin{figure}[t!]
     \centering
     \label{fig:full_3dtrack_2}
     \includegraphics[width=0.96\textwidth]{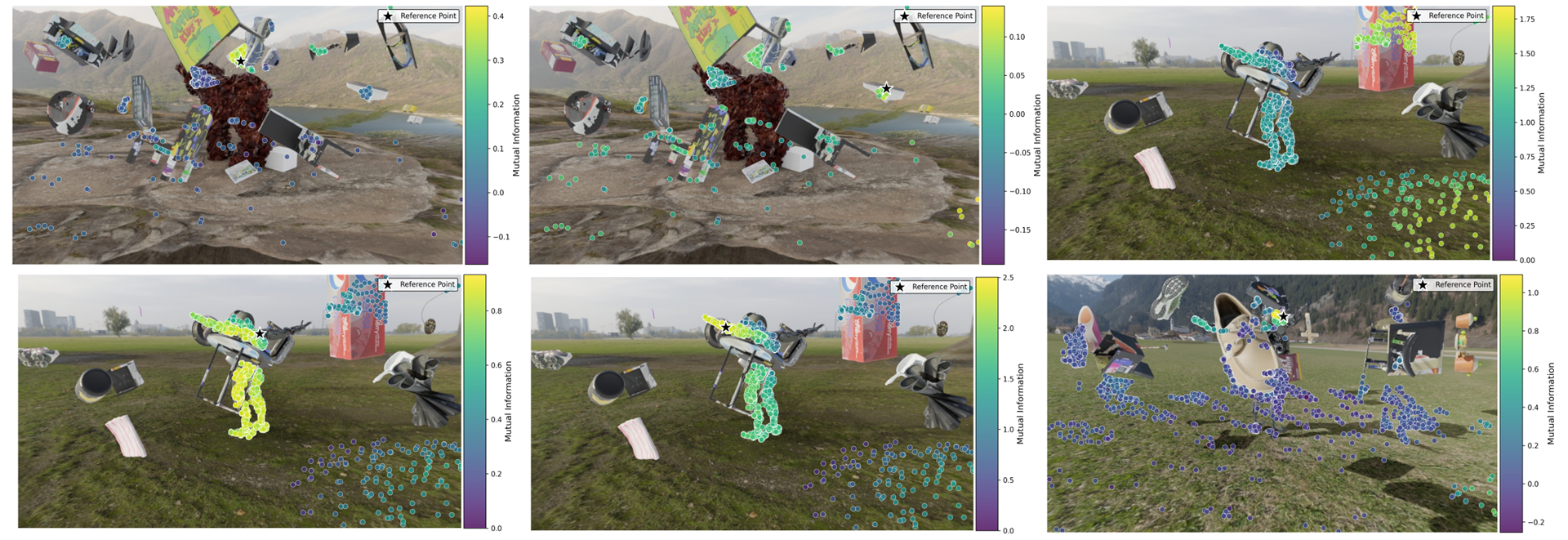}
     \caption{Full visualization results of \method estimated motion data.}
     \label{fig:3d_track_full2}
     \vspace{-8pt}
\end{figure}

\subsection{Details of model training and architecture} \label{Appendix:pre-train}
We provide the details of model architecture and training protocol as below.

\paragraph{Neural architecture details of \method}
For the attention module in \method, we configure the dimensionality of the key and value to 1536. The Weight-Decoding MLP comprises seven layers, each of which has 8196 hidden units. Both the joint-path and marginal-path in \method adopt a Perceiver IO--style architecture~\citep{jaegle2021perceiver}. Specifically, a fixed set of learnable latent queries first performs cross-attention over the input sequence of $n$ samples, and the subsequent self-attention layers are applied only within this latent space. This design avoids quadratic self-attention over the raw input sequence and ensures that both memory usage and computational cost scale linearly with the number of input samples $n$.

\paragraph{Optimizer setup}
We pre-train \method using the Adam optimizer with its default settings for 800,000 iterations, which takes approximately two weeks.

\paragraph{Batch size and distributional diversity}
Each training batch contains 256 independently sampled distributions with 5,000 samples per distribution (so the sequence length fed to the attention module is 5,000), providing the hypernetwork with sufficient statistical evidence to accurately estimate the optimal critic parameters for each distribution type.

\textbf{Neural network training protocol}. To ensure a fair comparison, all neural estimators MINE, InfoNCE, MINDE are trained
for a maximum of 2,000 epochs with a learning rate of $1\times 10^{-4}$,
employing early stopping if no improvement is observed within 100 epochs. KNIFE is trained with 200 epochs.

\textbf{Computational resource}. We pre-train \method on a server with 16 NVIDIA H800 GPUs, while all downstream evaluations are conducted using a single H800 GPU and 8-core Intel(R) 8480C CPU.

\newpage

\subsection{Failure Cases of Sliced Mutual Information}
\label{app:smi-failures}

\newcommand{\vecx}{\mathbf{x}} 
\newcommand{\vecy}{\mathbf{y}}

In this section, we discuss how sliced mutual information (SMI) may fail to fully characterize certain statistical dependencies. Before all, we would like to emphasize that \emph{slicing itself does not nullify dependence}: under mild conditions, statistical dependence in the original space remains detectable from one-dimensional projections, i.e., $\mi(\vecx;\vecy) \neq 0 
    \Leftrightarrow  
    \smi(\vecx;\vecy) \neq 0$
This property is one of the key reasons why SMI can serve as a useful and scalable measure for high-dimensional dependence assessment. However, SMI could still miss crucial information about dependencies, as demonstrated by the following two failure modes.

\paragraph{Indistinguishability of different dependence structures.}
The first failure mode is that two distinct dependence structures (with different underlying MIs) can induce the same SMI, making their structural differences indistinguishable. Consider two-dimensional jointly Gaussian random variables
\[
    \vecx=(X_1,X_2), 
    \qquad 
    \vecy=(Y_1,Y_2),
\]
where both $\vecx$ and $\vecy$ have identity covariance matrices, and their cross-covariance matrix is diagonal. We compare the following two models:

\begin{enumerate}[leftmargin=*]
    \item \emph{Sparse dependence}. The dependence is concentrated on a single coordinate:
    \[
        \operatorname{corr}(X_1,Y_1)=\rho,
        \qquad
        \operatorname{corr}(X_2,Y_2)=0.
    \]

    \item \emph{Dense dependence}. The dependence is evenly distributed across the two coordinates:
    \[
        \operatorname{corr}(X_1,Y_1)
        =
        \operatorname{corr}(X_2,Y_2)
        =
        \rho'.
    \]
\end{enumerate}

Although these two models have different dependence structures, they can induce exactly the same SMI. In particular, by setting
\[
    \rho'
    =
    \sqrt{
    1-
    \left(
    2e^{-\smi_{\mathrm{sparse}}(\rho)}-1
    \right)^2
    },
\]
we obtain
\[
    \smi_{\mathrm{dense}}(\rho')
    =
    \smi_{\mathrm{sparse}}(\rho).
\]
However, their full mutual information values are generally different:
\[
   \mi_{\mathrm{dense}}(\rho')
   \neq
   \mi_{\mathrm{sparse}}(\rho).
\]
For instance, when $\rho=1$, we can derive that for $\rho' \approx 0.7963$, the two models have the same SMI. However, in such case, we have
\[
    \mi_{\mathrm{sparse}}=\infty,
    \qquad
    \mi_{\mathrm{dense}}<\infty.
\]
This shows that SMI may preserve whether dependence exists while still losing information about how dependence is organized in the original high-dimensional space.

\paragraph{Dilution of nonlinear dependence under projection.}
The second failure mode arises when the main shared information is nonlinear and intrinsically high-dimensional, but becomes ambiguous or difficult to detect after projection. A simple example is the polar construction
\[
    \vecx = R(\cos\Theta,\sin\Theta), 
    \qquad 
    \vecy = R(\cos\Phi,\sin\Phi),
\]
where $R \sim p(R)$ is a shared latent radius, while $\Theta \sim \mathcal{U}[0,2\pi]$ and $\Phi \sim \mathcal{U}[0,2\pi]$ are independent random angles. In the original two-dimensional space, $\vecx$ and $\vecy$ share information through their common norm:
\[
    \|\vecx\| = \|\vecy\| = R.
\]
Thus, knowing the norm of $\vecx$ immediately determines the norm of $\vecy$, making the shared information explicit and easy to extract.

Now consider the one-dimensional projections of $\vecx$ and $\vecy$ along directions $u$ and $v$. It is straightforward to verify that the projected variables take the form
\[
    u^\top \vecx = R\cos(\Theta'),
    \qquad
    v^\top \vecy = R\cos(\Phi'),
\]
where $\Theta' \sim \mathcal{U}[0,2\pi]$ and $\Phi' \sim \mathcal{U}[0,2\pi]$ are again independent random angles. 

Here, in the projected space, the shared variable $R$ is multiplied by independent angular noise terms. Consequently, recovering the common radius from a single projected scalar becomes difficult: different values of $R$ can induce highly overlapping projected distributions due to the random cosine factors, making the shared information in $R$ much less identifiable. For instance, if $R \in \{1,1.01\}$, then $R$ is exactly recoverable from either $\|\vecx\|$ or $\|\vecy\|$ in the original space, but becomes hard to distinguish from the projected samples because the small difference in radius is easily masked by the multiplicative noises. In this sense, the nonlinear dependence encoded by the equality of norms is ``diluted'' by slicing.

This example shows that SMI can underestimate or obscure dependencies that are clear in the original high-dimensional geometry, but become entangled with nuisance variation after  projections.

\subsection{Sensitivity to slicing numbers $S$ and projection dimensions $k$}
\label{app:sensitivity}

We sweep $k$ and $S$ on CLIP embeddings ($d=1024$); see Fig.~\ref{fig:clip_kdim_sweep}. Two trends emerge. For any fixed $S$, the estimate grows with $k$ as larger projections retain more joint structure, with scalar slicing ($k=1$) remaining saturated as in \eqref{eq:k-smi}. For any fixed $k$, increasing $S$ reduces Monte-Carlo variance but cannot remove the bias of projecting into a $k$-dimensional subspace (Corollary~\ref{cor:k-sliced}). This supports our default $k=5$, $S=25$, and since \method amortizes per-slice estimation, the trade-off can be revisited at inference without retraining.

\begin{figure}[!t]
    \centering
    \includegraphics[width=0.96\textwidth]{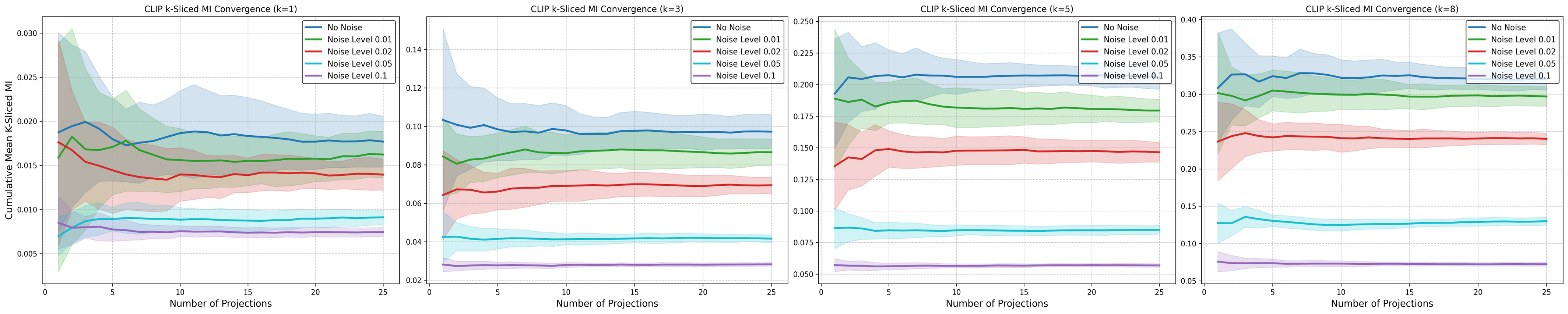}
    \caption{Comparison of slice dimension $k$ and slice number $S$ on CLIP-generated data (original dimension $1024$). Increasing $k$ recovers more ambient dependence per slice, while increasing $S$ reduces Monte-Carlo variance but does not remove the bias introduced by low-dimensional projection.}
    \label{fig:clip_kdim_sweep}
\end{figure}

\section{Limitations}
One limitation of \method is that while \method can effectively handle multivariate data with moderate dimensionalities (e.g. data up to 20 dimensions), it currently requires slicing techniques~\cite{goldfeld2021sliced, goldfeld2022k} to scale to higher dimensions, where extensive pre-training becomes challenging. Despite  relying on slicing in high-dimensional setups, \method is still the first neural method that can directly output MI without iterative optimization for data up to 20D, and it reliably quantifies statistical dependence for data up to 1024D in seconds, as demonstrated by our multiple real-world experiments. Note that for many applications, the exact value of MI is often not the interest; quantifying the orders of statistical dependence is already highly informative.

Another limitation of \method is that it may fall short in cases with small sample cases (e.g. $n < 400$), as seen in the independent testing experiments. In such case, the transformer fail to extract informative signals from a small population. That said, our method remains reliable for typical sample sizes encountered in reality (e.g. $n\geq500$), where our approach consistently matches MINE’s accuracy.


\end{document}